\algnewcommand{\Inputs}[1]{%
  \State \textbf{Inputs:}
  \Statex \hspace*{\algorithmicindent}\parbox[t]{.8\linewidth}{\raggedright #1}
}
\algnewcommand{\Initialize}[1]{%
  \State \textbf{Initialize:}
  \Statex \hspace*{\algorithmicindent}\parbox[t]{.8\linewidth}{\raggedright #1}
}
\def\indot<#1>{\langle #1 \rangle}
\begin{document}
%

\author{
 Daiki Suehiro\\
 \email{suehiro@ait.kyushu-u.ac.jp} \\
 \addr Faculty of Information Science \\and Electrical Engineering \\
 Kyushu University\\
and AIP, RIKEN\\
 Fukuoka, Japan, 8190395\\
\AND
 Kohei Hatano\\
 \email{hatano@inf.kyushu-u.ac.jp}\\
 \addr Faculty of Arts and Science\\
 Kyushu University\\
and AIP, RIKEN\\
 Fukuoka, Japan, 8190395\\
\AND
 Eiji Takimoto\\
 \email{eiji@inf.kyushu-u.ac.jp}\\
 \addr Department of Informatics\\
 Kyushu University\\
 Fukuoka, Japan, 8190395\\
\AND
 Shuji Yamamoto\\
 \email{yamashu@math.keio.ac.jp}\\
 \addr Department of Mathematics\\
 Keio University\\
and AIP, RIKEN\\
 Kanagawa, Japan, 2238522\\
\AND
 Kenichi Bannai\\
 \email{bannai@math.keio.ac.jp}\\
 \addr Department of Mathematics\\
 Keio University\\
and AIP, RIKEN\\
 Kanagawa, Japan, 2238522\\
\AND
 Akiko Takeda\\
 \email{takeda@mist.i.u-tokyo.ac.jp}\\
 \addr Department of Mathematical Informatics\\
 The University of Tokyo\\
and AIP, RIKEN\\
 Tokyo, Japan, 1138656\\
}

\editor{}

\title{Multiple-Instance Learning \\by Boosting Infinitely Many
  Shapelet-based Classifiers}
\maketitle

\newtheorem{defi}{Definition}
\newtheorem{theo}{Theorem}
\newtheorem{prop}[theo]{Proposition}
\newtheorem{coro}[theo]{Corollary}
\newtheorem{lemm}{Lemma}
\newtheorem{fact}{Fact}
\newtheorem{ex}{Example}
\def\remark{\par\noindent\hangindent0pt{\bf Remark.}~}


\def\OMIT#1{}
\def\newwd#1{{\em #1}}

\newcommand{\mnote}[1]{\marginpar{#1}}
\newcommand{\mynote}[1]{{\bf {#1}}}

%
%

\newcounter{nombre}
\renewcommand{\thenombre}{\arabic{nombre}}
\setcounter{nombre}{0}
\newenvironment{OP}[1][]{\refstepcounter{nombre}\par\bigskip \abovedisplayskip=0.5\abovedisplayskip \noindent{\sf OP \thenombre : #1}}{\par}

\global\long\def\T#1{#1^{\top}}

\newcommand{\shapelets}{shapelets}
\newcommand{\shapelet}{shapelet}
\newcommand{\LPS}[0]{LPM\xspace}
\newcommand{\targetH}{{local pattern matching hypothesis}}
\newcommand{\Nh}{N_{\mathrm{high}}}
\newcommand{\fsvm}{f_{\mathrm{SVM}}}
\newcommand{\frsvm}{f_{\mathrm{RSVM}}}
\newcommand{\OPTH}{\mathrm{OPT}_\textrm{hard}}
\newcommand{\OPTS}{\mathrm{OPT}_\textrm{soft}}
\newcommand{\OPTSNU}{\mathrm{OPT}_\mathrm{soft}(\nu^+)}
\newenvironment{claim}{\begin{trivlist}\item[]\textit{Claim}}{\end{trivlist}}
\newcommand{\AUC}{\mathrm{AUC}}
\newcommand{\bphi}{{\bf \phi}}
\newcommand{\bx}{{\bf x}}
\newcommand{\bsigma}{\boldsymbol{\sigma}}
\newcommand{\blambda}{\boldsymbol{\lambda}}
\newcommand{\kernel}{\boldsymbol{\mathrm{K}}}
\newcommand{\Vmat}{\boldsymbol{\mathrm{V}}}
\newcommand{\Xmat}{\boldsymbol{\mathrm{X}}}
\newcommand{\bw}{{\bf w}}
\newcommand{\bd}{{\bf d}}
\newcommand{\bk}{{\mathbf{k}}}
\newcommand{\bv}{{\bf v}}
\newcommand{\bu}{{\bf u}}
\newcommand{\bz}{{\bf z}}
\newcommand{\allins}{P_S}
\newcommand{\hatallins}{\hat{P}_S}
\newcommand{\multiallins}{P_S}
\newcommand{\op}{\textsf{OP}}
\newcommand{\bs}{{\bf s}}
\newcommand{\btau}{{\boldsymbol{\tau}}}
\newcommand{\bt}{{\bf t}}
\newcommand{\balpha}{\boldsymbol{\alpha}}
\newcommand{\bbeta}{\boldsymbol{\beta}}
\newcommand{\bgamma}{\boldsymbol{\gamma}}
\newcommand{\bxi}{\boldsymbol{\xi}}
\newcommand{\sbsq}{\mathrm{sub}}
\newcommand{\edge}{\mathrm{edge}}
\newcommand{\Ksub}{K_{\mathrm{sub}}}
\newcommand{\Ourmethod}[0]{our method\xspace}
\newcommand{\Ourshape}[0]{our shape.\xspace}
\newcommand{\hsigma}{\widehat\sigma}
\newcommand{\convhull}{\mathcal{H}}
\newcommand{\err}{\mathrm{err}}
\newcommand{\RW}{\mathrm{RW}}
\newcommand{\RCS}{\mathrm{RCS}}
\newcommand{\RV}{\mathrm{RV}}
\newcommand{\SRS}{\mathrm{SRS}}
\newcommand{\RSG}{\mathrm{RSG}}
\newcommand{\sign}{\mathrm{sign}}
\newcommand{\dom}{\mathcal{X}} 
\newcommand{\domp}{\mathcal{X}^{pos}} 
\newcommand{\domn}{\mathcal{X}^{neg}} %
\newcommand{\range}{\mathcal{Y}} 
\newcommand{\Natural}{\mathbb{N}} %
\newcommand{\Real}{\mathbb{R}} 
\newcommand{\Hilbert}{\mathbb{H}} 
\newcommand{\Prob}{\mathcal{P}}
\newcommand{\F}{\mathcal{F}}
\newcommand{\calS}{\mathcal{S}}
\newcommand{\calB}{\mathcal{B}}
\newcommand{\calF}{\mathcal{F}}
\newcommand{\calG}{\mathcal{G}}
\newcommand{\calT}{\mathcal{T}}
\newcommand{\calY}{\mathcal{Y}}
\newcommand{\Hyp}{\mathcal{H}}
\newcommand{\WH}{\mathcal{W}}
\newcommand{\EX}{\mathrm{EX}}
\newcommand{\filtEX}{\mathrm{FiltEX}}
\newcommand{\HSelect}{\mathrm{HSelect}}
\newcommand{\WL}{\mathrm{WL}}
\newcommand{\vecx}{\boldsymbol{x}}
\newcommand{\vecy}{\mbox{\boldmath $y$}}
\newcommand{\vecw}{\boldsymbol{w}}
\newcommand{\vecz}{\boldsymbol{z}}
\newcommand{\vecg}{\mbox{\boldmath $g$}}
\newcommand{\veca}{\mbox{\boldmath $a$}}
\newcommand{\vecd}{\boldsymbol{d}}
\newcommand{\vecell}{\mbox{\boldmath $\ell$}}
\newcommand{\vecsigma}{\boldsymbol{\sigma}}
\newcommand{\vecpi}{\boldsymbol{\pi}}
\newcommand{\vecxi}{\boldsymbol{\xi}}
\newcommand{\vece}{\mbox{\boldmath $e$}}
\newcommand{\vecB}{\mbox{\boldmath $B$}}
\newcommand{\vecD}{\mbox{\boldmath $D$}}
\newcommand{\vecI}{\mbox{\boldmath $I$}}
\newcommand{\tr}{\mathrm{tr}}
\newcommand{\vecG}{\mbox{\boldmath $G$}}
\newcommand{\vecF}{\mbox{\boldmath $F$}}
\newcommand{\tvecu}{\tilde{\mbox{\boldmath $u$}}}
\newcommand{\tvecw}{\tilde{\mbox{\boldmath $w$}}}
\newcommand{\tvecx}{\tilde{\mbox{\boldmath $x$}}}
\newcommand{\tw}{\tilde{w}}
\newcommand{\tx}{\tilde{x}}
\newcommand{\haty}{\hat{y}}
\newcommand{\vecf}{\mbox{\boldmath $f$}}
\newcommand{\vectheta}{\mbox{\boldmath $\theta$}}
\newcommand{\vecalpha}{\boldsymbol{\alpha}}
\newcommand{\vecbeta}{\mbox{\boldmath $\beta$}}
\newcommand{\vectildealpha}{\widetilde{\vecalpha}}
\newcommand{\vectildebeta}{\widetilde{\vecbeta}}
\newcommand{\tildealpha}{\widetilde{\alpha}}
\newcommand{\tildebeta}{\widetilde{\beta}}
\newcommand{\vechatalpha}{\widehat{\vecalpha}}
\newcommand{\vechatbeta}{\widehat{\vecbeta}}
\newcommand{\hatalpha}{\widehat{\alpha}}
\newcommand{\hatbeta}{\widehat{\beta}}
\newcommand{\vectau}{\mbox{\boldmath $\tau$}}
\newcommand{\veclambda}{\bm{\lambda}}
\newcommand{\vecu}{\mbox{\boldmath $u$}}
\newcommand{\vecv}{\mbox{\boldmath $v$}}
\newcommand{\vecp}{\boldsymbol{p}}
\newcommand{\vecq}{\mbox{\boldmath $q$}}
\newcommand{\vecr}{\boldsymbol{r}}
\newcommand{\vecc}{\boldsymbol{c}}
\newcommand{\fp}{\mathrm{fp}}
\newcommand{\fn}{\mathrm{fn}}
\newcommand{\ouralg}{{Our algorithm}~}
\newcommand{\Ouralg}{PUMMA~}

\newcommand{\bn}{\Delta_2} 
\newcommand{\psimp}{\mathcal{P}} %
\newcommand{\hatgamma}{\hat{\gamma}}
\newcommand{\indctr}[1]{I(#1)}
\newcommand{\CLASS}{\mathcal{C}}
\newcommand{\VC}{\mathrm{VC}}

\newcommand{\reg}{\mathcal{R}}
\newcommand{\breg}{D}

\newcommand{\filtex}{\mathrm{GenD_t}}
\newcommand{\gensamp}{\mathrm{GenSample}}

\newcommand{\argmax}{\mathop{\rm arg~max}\limits}
\newcommand{\argmin}{\mathop{\rm arg~min}\limits}
\newcommand{\Expo}{\mathop{\rm  E}\limits}

\newcommand{\half}{\frac{1}{2}}
\newcommand{\eps}{\varepsilon}

\newcommand{\hp}{\hat{p}}
\newcommand{\hmup}{\hat{\mu}[+]}
\newcommand{\hmun}{\hat{\mu}[-]}
\newcommand{\hgp}{\hat{\gamma}[+]}
\newcommand{\hgn}{\hat{\gamma}[-]}
\newcommand{\gain}{\Delta}
\newcommand{\hgain}{\hat{\Delta}}
\newcommand{\vecdelta}{\boldsymbol{\delta}}

\newcommand{\tildeO}{\Tilde{O}}
\newcommand{\permset}{S}
\newcommand{\base}{\boldsymbol{B}}
\newcommand{\calC}{\mathcal{C}}
\newcommand{\calP}{\mathcal{P}}
\newcommand{\calX}{\mathcal{X}}
\newcommand{\calE}{\mathcal{E}}
\newcommand{\Rdm}{\mathfrak{R}}
\newcommand{\GC}{\mathfrak{G}}
\newcommand{\hullC}{\mathrm{conv}(\calC)}
\newcommand{\hullH}{\mathrm{conv}(H)}
\newcommand{\conv}{\mathrm{conv}}

\newcommand{\Rmin}{R_{\mathrm{min}}}
\newcommand{\Rmax}{R_{\mathrm{max}}}

\def\ceil#1{%
\left\lceil #1 \right\rceil}

\def\defeq{%
\stackrel{\mathrm{def}}{=}}

\def\floor#1{%
\lfloor #1 \rfloor}

\def\myhang{%
    \par\noindent\hangindent20pt\hskip20pt}
\def\nitem#1{%
    \par\noindent\hangindent40pt
    \hskip40pt\llap{#1~}}

\newcommand{\E}{\boldsymbol{E}}

\newcommand{\pdiff}{\Phi_{\mathrm{diff}}(\multiallins)}
\newcommand{\calI}{\mathcal{I}}

 \begin{abstract}
We propose a new formulation of  Multiple-Instance Learning (MIL).
In typical MIL settings,  a unit of data is given as a set of instances
  called a bag and the goal is to find a good classifier of bags based on
  similarity from a single or finitely many ``shapelets'' (or patterns), where the similarity
  of the bag from a shapelet is the maximum similarity of
  instances in the bag.
  Classifiers based on a single shapelet are not sufficiently strong for
  certain applications.   
  Additionally, previous work with multiple shapelets has
  heuristically chosen some of the instances as
  shapelets with no theoretical guarantee of its generalization ability.
Our formulation provides a richer class of the final classifiers based on infinitely many shapelets.
We provide an efficient algorithm for the new formulation, in addition to
  generalization bound. 
Our empirical study demonstrates 
that our approach is effective not only for MIL tasks
  but also for Shapelet Learning for time-series classification\footnote{The preliminary version of this
    paper is~\cite{Suehiro2017BoostingTK}, which only focuses on
    shapelet-based time-series classification but not
    Muptiple-Instance Learning. Note that the preliminary version has not been
    published.}.

 \iffalse
 We investigate the connection of two learning problems,
Multiple-Instance Learning (MIL) and Shapelet-Learning (SL).
MIL is a popular learning framework in machine learning 
community, in which data is given as the set of instances called bag.
SL is recently proposed in time-series analysis community
that learn local features defined by some ``short'' series called shapelets.
We show that SL problem can be reduced to our proposed MIL
formulation, and design theoretically guaranteed learning algorithms
based on boosting with kernelized base classifier. 
Our main contribution is the problem formulation and algorithms that
allow us to learn shapelet-like hypothesis in MIL. 
Moreover, we give theoretical justification about 
the generalization performance of existing 
shapelet-learning algorithms that are only been shown
empirical effective.
Our algorithms perform favorably with baselines in our empirical study.

 \iflase
 We propose a new formulation of  Multiple-Instance Learning (MIL),
  which is motivated by Shaplet Learning (SL) for time-series classification.
 In typical MIL settings,  a unit of data is given as a set of instances
  called a bag and the goal is to find a good classifier of bags based on
  similarity from some ``pattern'', where the similarity
  of the bag from the pattern is the maximum of similarity of
  instances in the bag.
  On the other hand, in typical SL settings, the goal is to find several
  patterns, from which a good ensemble of similarity-based classifiers
  can be obtained.
  By viewing SL as an MIL, the resulting formulation provides a richer
  class of the final classifier based on infinitely many patterns.
We give an efficient algorithm for the new formulation, as well as
  generalization bound. 
Our empirical study shows that our approach is effective not only for SL tasks
  but also other MIL tasks.
 \fi

 \end{abstract}
\section{Introduction}
Multiple-Instance Learning (MIL) is a fundamental framework of
supervised learning with a wide range of applications such as
prediction of molecule activity, image classification, and so on.
Since the notion of MIL was first proposed by~\citet{Dietterich:1997},
MIL has been extensively studied both in theoretical and practical aspects
\citep{Gartner02multi-instancekernels,NIPS2002misvm,Sabato:2012:MLA,pmlr-v28-zhang13a,Doran:2014,CARBONNEAU2018329}.

A standard MIL setting is described as follows:
A learner receives sets $B_1, B_2, \ldots, B_m$
called bags, each of which contains multiple instances. 
In the training phase, each bag is labeled
but instances are not labeled individually.
The goal of the learner is to obtain a hypothesis 
that predicts the labels of 
unseen bags correctly\footnote{Although there are settings where
instance label prediction is also considered, 
we focus only on bag-label prediction in this paper.}.
One of the most common hypotheses used in practice has the following form:
\begin{align}
\label{align:single-shape}
h_\bu(B) = \max_{x \in B} \left\langle \bu, \Phi(x)\right\rangle,
\end{align}
where $\Phi$ is a feature map and 
$\bu$ is a feature vector which we call a \emph{shapelet}.
In many applications, $\bu$ is
interpreted as a particular ``pattern'' in the feature space
and the inner product
as the similarity of $\Phi(x)$ from $\bu$.
Note that we use the term ``shapelets'' by following the terminology
of Shapelet Learning, which is a framework for time-series classification, 
although it is often called ``concepts'' in the literature of MIL.
Intuitively, this hypothesis evaluates a given bag by the
maximum similarity of the instances in the bag from the shapelet $\bu$.
Multiple-Instance Support Vector Machine (MI-SVM) proposed by~\citet{NIPS2002misvm} 
is a widely used algorithm that
uses this hypothesis class and 
learns $\bu$.
It is well-known that MIL algorithms using this hypothesis class
practically perform well for various multiple-instance datasets.
Moreover, a generalization error bound of the hypothesis class 
is given by~\citet{Sabato:2012:MLA}.

However, in some domains such as 
image recognition and document classification,
it is said that the hypothesis class of (\ref{align:single-shape}) is
not effective enough~\citep[see, e.g.,][]{MI1normSVM}.
To employ MIL on such domains more effectively,
\citet{MI1normSVM} proposes to use
a convex combination of various shapelets $\bu$ in a finite set
$U=\{\Phi(z) \mid z \in \bigcup_{i=1}^n B_i\}$, which is defined based 
on all instances that appear in the training sample,
\begin{align}
\label{align:our-hypo}
 	 g(B) = \sum_{\bu \in U} w_\bu \max_{x \in B} \left\langle \bu,
  \Phi(x)\right\rangle, 
\end{align}
where $\bw$ is a probability vector over $U$.   
They demonstrate that
this hypothesis with the Gaussian kernel performs well
in image recognition.
However, no theoretical justification is known for the hypothesis class of type
(\ref{align:our-hypo}) with the finite set $U$ made from the empirical bags.
%
By contrast,
for sets $U$ of infinitely many shapelets $\bu$ with bounded norm,
generalization bounds of \citet{Sabato:2012:MLA} are
applicable as well to the hypothesis class (\ref{align:our-hypo}),
but the result of \citet{Sabato:2012:MLA} does not provide a practical formulation such as MI-SVM. 

\subsection{Our Contributions}
In this paper, 
we propose an MIL formulation with the hypothesis class
(\ref{align:our-hypo}) for sets $U$ of infinitely many shapelets.
More precisely,
we formulate a $1$-norm regularized 
soft margin maximization problem 
to obtain linear combinations of shapelet-based hypotheses.

Then, we design an algorithm 
based on Linear Programming Boosting
\citep[LPBoost, ][]{demiriz-etal:ml02} that solves the soft margin optimization problem
via a column generation approach.
Although
the sub-problems (weak learning problem) become optimization
problems over an infinite-dimensional space,
we can show that an analogue of the representer theorem holds on it 
and allows us to reduce it to a non-convex optimization problem
(difference of convex program, DC-program for short)
over a finite-dimensional space.
While it is difficult to solve the sub-problems exactly due to
non-convexity,
various techniques \citep[e.g.,][]{Tao1988,Yu:2009:LSS} are investigated for DC programs and we can find 
good approximate solutions efficiently for many cases in practice.

Furthermore, we prove a generalization error bound of hypothesis class
(\ref{align:our-hypo}) with infinitely large sets $U$.
In general, our bound is incomparable with those of
\citet{Sabato:2012:MLA}, but ours has better rate in terms of the sample size
$m$.

We introduce an important application of our result,
shapelet learning for time-series classification (we show details later).
In fact, in time-series domain, 
most shapelet learning algorithms 
have been designed heuristically. 
As a result, our proposed algorithm
becomes the first algorithm for shapelet learning in
time-series classification that guarantees 
the theoretical generalization performance.

Finally, the experimental results show that our approach
performs favorably with a baseline for 
shapelet-based time-series classification tasks
and outperforms baselines for several MIL tasks.

\subsection{Comparison to Related Work}
There are many MIL algorithms 
with hypothesis classes which are different from
(\ref{align:single-shape}) or (\ref{align:our-hypo}). 
\citep[e.g., ][]{Gartner02multi-instancekernels,NIPS2005_2926,MI1normSVM}.
Many of them adopt different approaches for the bag-labeling hypothesis
from shapelet-based classifiers 
(e.g., \citet{NIPS2005_2926} used a Noisy-OR based hypothesis and 
\citet{Gartner02multi-instancekernels} proposed a new kernel
called a set kernel).

\citet{Sabato:2012:MLA} proved generalization bounds of hypotheses
classes for MIL including those of ~(\ref{align:single-shape}) and (\ref{align:our-hypo}) with infinitely large
sets $U$.
They also proved the PAC-learnability of the class
(\ref{align:single-shape}) using the boosting approach under some
technical assumptions.
Their boosting approach is different from our work in that they assume
that labels are consistent with some hypothesis of the
form~(\ref{align:single-shape}), while we consider arbitrary distributions
over bags and labels.

It is known that other boosting-based methods achieve successful results
for several MIL tasks~\citep{mi_adaboost,NIPS2003DPBOOST,NIPS2005_2926}.
They use different hypothesis classes than ours under different assumptions. 

  \subsection{Connection between MIL and Shapelet Learning for Time Series
  Classification}

Here we briefly mention that MIL with
type (\ref{align:our-hypo}) hypotheses
is closely related to Shapelet Learning (SL), which is a framework
for time-series classification and has been extensively
studied by~\citep{Ye:2009:TSS:1557019.1557122,KeoghR13,Hills:2014:CTS:2597434.2597448,Grabocka:2014:LTS:2623330.2623613}
in parallel to MIL.
SL is a notion of learning with a particular method of feature extraction,
which is defined by
a finite set $M \subseteq \Real^\ell$ of
real-valued ``short'' sequences called shapelets and
a similarity measure (not necessarily a Mercer kernel)
$K: \Real^\ell \times \Real^\ell \to \Real$
in the following way.
A time series $\btau = (\tau[1], \dots, \tau[L]) \in \Real^L$
can be identified with a bag
$B_\btau = \{(\tau[j], \ldots, \tau[j+\ell-1])
\mid 1 \leq j \leq L-\ell+1\}$ consisting of all subsequences
of $\btau$ of length $\ell$.
Then, the feature of $\btau$ is a vector
	$\left(\max_{\bx \in B_\btau} K(\bz, \bx) \right)_{\bz \in M}$
of a fixed dimension $|M|$ regardless of the length
$L$ of the time series $\btau$.
When we employ a linear classifier on top of the features, we obtain
a hypothesis of the form
\begin{align}
\label{align:multi-shape-ts}
	g(\btau) = \sum_{\bz \in M} w_\bz \max_{\bx \in B_\tau} K(\bz,\bx),
\end{align}
which is essentially the same form as~(\ref{align:our-hypo}),
except that
finding good shapelets $M$ is a part of the learning task,
as well as to finding good weight vector $\bw$.
This is one of the most successful approach of SL
\citep{Hills:2014:CTS:2597434.2597448,Grabocka:2014:LTS:2623330.2623613,GrabockaWS15,renard:hal-01217435,HouKZ16}, where
a typical choice of $K$ is $K(\bz,\bx) = -\|\bz - \bx\|_2$.
However, almost all existing methods heuristically choose
shapelets $M$ and have no theoretical guarantee on how good
the choice of $M$ is.

Note also that in the SL framework, each $\bz \in M$ is called a
shapelet, while in this paper, we assume that $K$ is a kernel
$K(z,x) = \langle \Phi(z), \Phi(x) \rangle$ and
any $\bu$ (not necessarily $\Phi(z)$ for some $z$)
in the Hilbert space is called a shapelet.


Curiously, despite MIL and SL share similar motivations and
hypotheses, the relationship between MIL and SL has not yet been 
pointed out.
From the shapelet-perspective in MIL, the hypothesis (\ref{align:single-shape})
is regarded as a ``single shapelet''-based hypothesis, and
the hypothesis (\ref{align:our-hypo}) is regarded as 
``multiple shapelet''-based hypothesis.
We refer to a linear combination of maximum similarities based on
shapelets such as (\ref{align:our-hypo}) and
(\ref{align:multi-shape-ts})
as {\it shapelet-based classifiers}.


\section{Preliminaries}
\label{sec:prelim}
Let $\calX$ be an instance space.
A bag $B$ is a finite set of instances chosen from $\calX$. 
The learner receives a sequence of labeled bags
$S = ((B_1, y_1), \ldots, (B_m, y_m)) \in
(2^{\calX} \times \{-1, 1\})^m$ called a sample, where 
each labeled bag is independently drawn according to some unknown
distribution $D$ over $2^{\calX} \times \{-1, 1\}$.
Let $\allins$ denote the
set of all instances that appear in the sample $S$.
That is, $\allins = \bigcup_{i=1}^m B_i$.
Let $K$ be a kernel over $\calX$, which is used to measure the
similarity between instances, and let
$\Phi: \calX \to \Hilbert$ denote a feature map associated with
the kernel $K$ for a Hilbert space $\Hilbert$,
that is,
$K(z, z')=\langle \Phi(z), \Phi(z')\rangle$
for instances $z, z' \in \calX$, where
$\langle \cdot,\cdot \rangle$ denotes the inner product over $\Hilbert$.
The norm induced by the inner product is denoted by
$\|\cdot\|_\Hilbert$ defined as $\|\bu\|_{\Hilbert} =
\sqrt{\langle \bu, \bu \rangle}$ for $\bu \in \Hilbert$. 

For each $\bu \in \Hilbert$ which we call a shapelet,
we define 
a {\it shapelet-based classifier}
denoted by $h_{\bu}$, 
as the function that maps a given bag
$B$ to the maximum of
the similarity scores between shapelet $\bu$ and $\Phi(x)$
over all instances $x$ in $B$.
More specifically,
\[
	h_{\bu}(B) = \max_{x \in B}
		\left\langle\bu, \Phi(x) \right\rangle.
\]
For a set $U \subseteq \Hilbert$, we define the class of shapelet-based classifiers
as 
\[
	H_U = \left\{ h_{\bu} \mid \bu \in U \right\}
\]
and let $\conv(H_U)$ denote the set of convex combinations of
shapelet-based classifiers in $H_U$. More precisely,
\begin{align*}
	 \conv(H_U) = 
       &\left\{
		\int_{\bu \in U} w_\bu h_\bu d\bu \mid
		\text{$w_\bu$ is a density over $U$}
          \right\} \\
       =&\left\{
		\sum_{\bu \in U'} w_\bu h_\bu \mid
		\forall \bu \in U', w_\bu \geq 0, \right. \\
		&\left. \sum_{\bu \in U'} w_\bu = 1,
		\text{$U' \subseteq U$ is a finite support} 
 	\right\}.
\end{align*}
The goal of the learner is to find a hypothesis
$g \in \conv(H_U)$, so that its generalization error
$\calE_D(g) = \Pr_{(B,y) \sim D}[\sign(g(B)) \neq y]$
is small.
Note that since the final hypothesis $\sign \circ g$
is invariant to any scaling of $g$, we assume
without loss of generality that
\[
	U = \{\bu \in \Hilbert \mid \|\bu\|_\Hilbert \leq 1\}.
\]
Let
$\calE_{\rho}(g)$ denote the \textit{empirical margin loss} of $g$
over $S$,
that is, $\calE_{\rho}(g) = |\{i \mid y_i g(B_i) < \rho\}|/m$.



\section{Optimization Problem Formulation}
In this paper we formulate the problem as
soft margin maximization with $1$-norm regularization,
which ensures a generalization bound
for the final hypothesis~\citep[see, e.g.,][]{demiriz-etal:ml02}.
Specifically, the problem is formulated as a
linear programming problem (over infinitely many variables) as
follows:
\begin{align}\label{align:LPBoostPrimal}
\max\limits_{\rho,w,\bxi} \; &
 \rho  - \frac{1}{\nu m}\sum_{i=1}^m \xi_{i}
\\ \nonumber
 \text{sub.to} \; 
& \int_{\bu \in U}y_i w_{\bu}h_\bu(B_i) 
d\bu 
                       \geq \rho -\xi_{i} \wedge \xi_{i} \geq0, ~ i \in [m],
\\ \nonumber
&\int_{\bu \in U} w_{\bu}d\bu  = 1,
 w_\bu  \geq 0,~ \rho \in \Real,
\end{align}
where $\nu \in [0,1]$ is a parameter.
To avoid the integral over the Hilbert space,
it is convenient to consider the dual form:
\begin{align}\label{align:LPBoostDual}
\min\limits_{\gamma,\bd} \; & \gamma
\\ \nonumber
\text{sub.to} \; 
& \sum_{i=1}^my_id_i h_\bu(B_i) \leq \gamma,
\; \bu \in U,
\\ \nonumber
& 0 \leq d_i \leq 1/(\nu m),\; i \in [m],
\\ \nonumber
& \sum_{i=1}^m d_{i}  = 1, ~ \gamma \in \Real.
\end{align}
The dual problem is categorized as a semi-infinite program (SIP) because
it contains infinitely many constraints.
Note that the duality gap is zero because the problem
(\ref{align:LPBoostDual})
is linear and the optimum is finite \citep[Theorem 2.2 of][]{Shapiro09}.
We employ column generation to solve the dual problem:
solve (\ref{align:LPBoostDual}) for a finite subset $U' \subseteq U$,
find $\bu$ to which the corresponding constraint is maximally violated
by the current solution (\emph{column generation part}),
and repeat the procedure with $U' = U' \cup \{\bu\}$ 
until a certain stopping criterion is met.
In particular, we use
LPBoost~\citep{demiriz-etal:ml02}, a well-known and practically
fast algorithm of column generation.
Since the solution $\bw$ is expected to be sparse
due to the 1-norm regularization,
the number of iterations is expected to be small.

Following the terminology of boosting, we refer to the
column generation part as weak learning.
In our case, weak learning is formulated as the following
optimization problem:
\begin{align}\label{align:WeakLearn_u}
\max_{\bu \in \Hilbert} \;
\sum_{i=1}^my_id_i
\max_{x \in B_i}  \left\langle \bu,  \Phi\left(x \right)
                       \right\rangle \; \text{sub.to} \; \|\bu\|_\Hilbert^2 \leq 1.
\end{align}
Thus, we need to design a weak learner for solving
(\ref{align:WeakLearn_u}) for a given sample weighted by $\bd$.
It seems to be impossible to solve it directly because
we only have access to $U$ through the associated kernel.
Fortunately, we prove a version of representer theorem given below,
which makes (\ref{align:WeakLearn_u}) tractable.
\begin{theo}[Representer Theorem]\label{theo:represent}
The solution $\bu^*$ of (\ref{align:WeakLearn_u}) can be
written as
$\bu^* = \sum_{z \in \allins} \alpha_z \Phi(z)$
for some real numbers $\alpha_z$.
\end{theo}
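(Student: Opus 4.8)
The plan is to run the standard orthogonality argument underlying representer theorems, taking care that the objective of~(\ref{align:WeakLearn_u}) is only a difference of convex functions (a pointwise maximum of linear functionals, combined with signs), not something smooth or concave.

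First I would set $V = \mathrm{span}\{\Phi(z)\mid z\in\allins\}\subseteq\Hilbert$. Each bag $B_i$ is a finite set and there are finitely many bags, so $\allins$ is finite and $V$ is finite-dimensional. Using the orthogonal decomposition $\Hilbert = V\oplus V^\perp$, write any $\bu\in\Hilbert$ as $\bu = \bu_\parallel + \bu_\perp$ with $\bu_\parallel\in V$ and $\bu_\perp\in V^\perp$. Every instance $x$ that appears in the objective lies in some $B_i\subseteq\allins$, hence $\Phi(x)\in V$ and $\langle\bu,\Phi(x)\rangle = \langle\bu_\parallel,\Phi(x)\rangle$; therefore the objective value at $\bu$ coincides with that at $\bu_\parallel$. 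At the same time $\|\bu\|_\Hilbert^2 = \|\bu_\parallel\|_\Hilbert^2 + \|\bu_\perp\|_\Hilbert^2 \ge \|\bu_\parallel\|_\Hilbert^2$, so if $\bu$ is feasible then so is $\bu_\parallel$. Consequently, replacing any feasible point by its projection onto $V$ preserves feasibility and does not decrease the objective, so there is an optimizer lying in $V$, and by definition of $V$ it has the claimed form $\bu^* = \sum_{z\in\allins}\alpha_z\Phi(z)$.

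The one point that deserves a line of justification is that an optimizer exists at all, since the closed unit ball of the (possibly infinite-dimensional) $\Hilbert$ need not be compact. But the objective depends on $\bu$ only through the finitely many inner products $\langle\bu,\Phi(z)\rangle$, $z\in\allins$, equivalently only through $\bu_\parallel$, so it suffices to maximize over the closed unit ball of the finite-dimensional space $V$, which is compact; and the objective is continuous there, being a finite linear combination (with coefficients $y_i d_i$) of maxima over the finite sets $B_i$ of continuous linear functionals $\bu\mapsto\langle\bu,\Phi(x)\rangle$. Hence the maximum is attained. I do not expect any genuine obstacle here: the argument uses only that the objective is a function of $(\langle\bu,\Phi(z)\rangle)_{z\in\allins}$ and that the constraint is monotone in $\|\bu_\perp\|_\Hilbert$ — neither concavity nor differentiability of the objective is needed, which is precisely why the representer reduction survives the non-convexity of~(\ref{align:WeakLearn_u}).
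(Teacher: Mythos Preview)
Your argument is correct. At its core it uses the same orthogonal decomposition $\bu=\bu_\parallel+\bu_\perp$ that the paper uses, but you apply it directly to the non-convex problem~(\ref{align:WeakLearn_u}), whereas the paper first partitions the feasible region according to the selector $\theta_\bu$ (which instance in each bag attains the $\max$), obtains a convex sub-problem for each fixed $\theta$, and then invokes the standard representer theorem on each sub-problem. Your route is more elementary: you correctly observe that the orthogonality argument only needs the objective to depend on $\bu$ through the finitely many inner products $\langle\bu,\Phi(z)\rangle$, $z\in\allins$, and the constraint to be monotone in $\|\bu_\perp\|_\Hilbert$; neither convexity nor smoothness is required, so the decomposition into sub-problems is unnecessary. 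What the paper's detour buys is a literal reduction to the textbook statement of the representer theorem; what your approach buys is brevity and a cleaner proof, plus an explicit existence argument (compactness in the finite-dimensional $V$) that the paper leaves implicit.
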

Our theorem can be derived from an application 
of the standard representer theorem \citep[see, e.g.,][]{Mohri.et.al_FML}.
Intuitively, we prove the theorem by decomposing the optimization problem
(\ref{align:WeakLearn_u})
into a number of sub-problems, so that the standard representer theorem
can be applied to each of the sub-problems.
The detail of the proof is given in the supplementary materials.
Note that Theorem~\ref{theo:represent} gives justification 
to the simple heuristics in the literature: choosing the shapelets
extracted from $P_S$.

Theorem~\ref{theo:represent} says that the weak learning problem
can be rewritten as the following tractable form:
\begin{OP}\label{align:WeakLearn} {\bf Weak Learning Problem}
\begin{align*}
\min_{\balpha} \;& 
- \sum_{i=1}^m{d}_i y_i
\max_{x \in B_i} \sum_{z \in \allins} \alpha_{z} K\left(
                        z, x \right)
\\ \nonumber
\text{sub.to} \; & 
\sum_{z \in \allins}\sum_{v \in \allins}\alpha_{z}\alpha_{v}K \left(z, v \right)  \leq 1.
\end{align*}
\end{OP}
Unlike the primal solution $\bw$, the dual solution
$\balpha$ is not expected to be sparse.
In order to obtain a more interpretable hypothesis,
we propose another formulation of weak learning where
1-norm regularization is imposed on $\balpha$, so that
a sparse solution of $\balpha$ will be obtained.
In other words, instead of $U$, we consider the feasible set
$\hat{U} = \left\{\sum_{z \in \allins}\alpha_{z} \Phi(z):
\|\balpha\|_1 \leq 1\right\}$,
where $\|\balpha\|_1$ is the 1-norm of $\balpha$.
\begin{OP}\label{align:WeakLearnSP} {\bf Sparse Weak Learning Problem}
\begin{align*}
\min_{\balpha} \; & 
- \sum_{i=1}^m{d}_i y_i
\max_{x \in B_i} \sum_{z \in \allins} \alpha_{z} K\left(
                        z, x\right)
\\
\text{sub.to} \;\; & \|\balpha\|_1  \leq 1
\end{align*}
\end{OP}
Note that when running LPBoost with a weak learner for
\op~\ref{align:WeakLearnSP}, we obtain a final hypothesis
that has the same form of generalization
bound as the one stated in Theorem~\ref{theo:main}, which
is of a final hypothesis obtained when used with a weak learner
for \op~\ref{align:WeakLearn}.
To see this, consider a feasible space
$\hat U_\Lambda = \left\{\sum_{z \in \allins}\alpha_{z} \Phi(z):
\|\balpha\|_1 \leq \Lambda \right\}$
for a sufficiently small $\Lambda > 0$, so that
$\hat U_\Lambda \subseteq U$. Then since
$H_{\hat U_\Lambda} \subseteq H_U$, a generalization bound
for $H_U$ also applies to $H_{\hat U_\Lambda}$.
On the other hand, since the final hypothesis $\sign \circ g$
for $g \in \conv(H_{\hat U_\Lambda})$ is invariant to the
scaling factor $\Lambda$, the generalization ability is independent
of $\Lambda$.


%


\section{Algorithms}

For completeness, we present the pseudo code of LPBoost
in Algorithm~\ref{alg:LPBoost}.

For the rest of this section,
we describe our algorithms for the weak learners.
For simplicity, we denote by $\bk_x \in \Real^{P_S}$
a vector given by $k_{x,z} = K(z,x)$ for every $z \in P_S$.
Then, the objective function of \op~\ref{align:WeakLearn}
(and \op~\ref{align:WeakLearnSP})
can be rewritten as
\[
	\sum_{i:y_i = -1} d_i \max_{x \in B_i} \bk_x^T \balpha
	- \sum_{i:y_i = 1} d_i \max_{x \in B_i} \bk_x^T \balpha,
\]
which can be seen as a difference $F - G$ of two convex functions 
$F$ and $G$ of $\balpha$.
Therefore, the weak learning problems are DC programs and thus
we can use DC algorithm~\citep{Tao1988,Yu:2009:LSS}
to find an $\epsilon$-approximation of a local optimum.
We employ a standard DC algorithm. That is, for each
iteration $t$, we linearize the concave term $G$
with $\nabla_{\balpha} G(\balpha_t)^T \balpha$ at
the current solution $\balpha_t$, which is 
$\sum_{i:y_i=1} d_i \bk_{x_i}^T \balpha$ with
$x_i = \arg\max_{x \in B_i} \bk_x^T \balpha$ in our case,
and then update the solution to $\balpha_{t+1}$ by solving the resultant
convex optimization problem $\op'_t$.

In addition, the problems $\op'_t$
for \op~\ref{align:WeakLearn} and \op~\ref{align:WeakLearnSP}
are reformulated as a second-order cone programming (SOCP) problem
and an LP problem, respectively, and thus both
problems can be efficiently solved.
To this end, we introduce new variables
$\lambda_i$ for all negative bags $B_i$ with $y_i = -1$
which represent the factors
$\max_{x \in B_i} \bk_x^T \balpha$.
Then we obtain the equivalent problem to $\op'_t$
for \op~\ref{align:WeakLearn}
as follows:
\begin{align}\label{align:WeakLearnSub}
\min_{\balpha, \blambda} \; & 
\sum_{i:y_i= -1}d_i \lambda_i
- \sum_{i:y_i = 1} d_i \max_{x \in B_i}\bk_{x_i}^T \balpha 
\\ \nonumber
\text{sub.to} \; &  \bk_x^T \balpha
                      \leq \lambda_i 
                      ~(\forall i:y_i=-1, \forall x \in B_i), \\ \nonumber
&\sum_{z \in \allins}\sum_{v \in \allins}\alpha_{z}\alpha_{v}K \left(z, v \right)  \leq 1.
\end{align}
It is well known that this is an SOCP problem.
Moreover, it is clear that $\op'_t$ for
\op~\ref{align:WeakLearnSP} can be formulated as an LP problem.
We describe the algorithm for \op~\ref{align:WeakLearn}
in Algorithm~\ref{alg:WeakLearn}.

\begin{algorithm}
\caption{LPBoost using WeakLearn}
\label{alg:LPBoost}
\begin{algorithmic}[0]
\Inputs{$S$, kernel $K$, $\nu \in (0, 1]$, $\epsilon > 0$}
\Initialize{$\bd_0 \leftarrow (\frac{1}{m}, \ldots, \frac{1}{m}), \gamma=0$}
\For{ $t=1,\dots $}
 \State $h_t\leftarrow$  Run {\bf WeakLearn}($S$, $K$, $\bd_{t-1}$,
   $\epsilon$)
 \If {$\sum_{i=1}^my_id_i h_t(B_i) \leq \gamma$}
 \State $t = t-1$, break
 \EndIf
\begin{align}
\nonumber
& (\gamma, \bd_t) \leftarrow \arg\min\limits_{\gamma,\bd} \;  \gamma 
\\ \nonumber
&\text{sub.to}~~ 
 \sum_{i=1}^my_id_i h_j(B_i) \leq \gamma
~~(j = 1, \ldots, t), ~~
\\ \nonumber
&0 \leq d_i \leq 1/\nu m ~~(i \in [m]),  \sum_{i=1}^m d_i  = 1, ~ \gamma \in \Real.
\end{align}
\EndFor
\State $\bw \leftarrow$ Lagrangian multipliers of the last solution 
\State $g \leftarrow \sum_{j=1}^t w_j h_j$ \\
\Return {$\sign(g)$}
\end{algorithmic}
\end{algorithm}

\begin{algorithm}[ht]
\caption{WeakLearn using the DC Algorithm}
\label{alg:WeakLearn}
\begin{algorithmic}[0]
\Inputs{$S$, $K$, 
    $\bd$, $\epsilon$ (convergence parameter)}
 \Initialize{$\balpha_0 \in \Real^{|\allins|}$, $f_0 \leftarrow \infty$}
 \For{$t=1,\dots $}
\For{$\forall k:y_k=+1$}
\State 
$\displaystyle x^*_k \leftarrow \arg\max_{x \in B_k} \sum_{z \in \allins}
   d_k\alpha_{z} K\left(z, x\right)$ 
\EndFor 
\State
\begin{minipage}{6cm}
\begin{align}\label{align:subprob}
\nonumber
 f \leftarrow \min_{\balpha, \boldsymbol{\lambda}} 
\;& 
- \sum_{k:y_k=+1} {d}_k
\sum_{z \in \allins} \alpha_{z} K\left(z, x_k^*\right)
\\ 
&+ \sum_{r:y_r=-1}{d}_r \lambda_r 
\\ \nonumber
\text{sub.to} \;&  \sum_{z \in \allins} \alpha_{z} K\left(z, x  \right) 
                      \leq \lambda_r 
  \\ \nonumber
                      &
~(\forall r:y_r=-1, \forall x \in B_r),\\ \nonumber
&\sum_{z \in \allins}\sum_{v \in \allins} \alpha_{z}\alpha_{v}K
  \left(z, v \right)  \leq 1.
\end{align}
\end{minipage}
\State $\balpha_t \leftarrow \balpha$, $f_t \leftarrow f$
\If{$f_{t-1} - f_{t} \leq \epsilon$} 
\State break
\EndIf 
\EndFor \\
\Return{$h(B) =  \max_{x \in
    B}\sum_{z \in \allins}\alpha_{z}K(z, x)$}
\end{algorithmic}
\end{algorithm}

\section{Generalization Bound of the Hypothesis Class}
\label{sec:theorem1}
In this section, we provide a generalization bound of
hypothesis classes $\conv(H_U)$ for various $U$ and $K$.

Let $\Phi(\multiallins)=\{\Phi(z) \mid z \in \multiallins\}$.
Let $\pdiff=\{\Phi(z) - \Phi(z') \mid
z,z'\in \multiallins, z \neq z'\}$.
By viewing each instance $\bv \in \pdiff$ as a hyperplane
$\{\bu \mid \langle \bv, \bu \rangle = 0\}$,
we can naturally define a partition
of the Hilbert space $\Hilbert$
by the set of all hyperplanes $\bv \in \pdiff$.
Let $\calI$ be the set of all cells of the partition, i.e.,
$\calI=\{ I \mid
I=\cap_{\bv \in V}
\{\bu \mid  \langle \bv, \bu \rangle \geq 0\}, I \neq \emptyset, \text{$\forall \bv
\in \pdiff$, $V$
contains either $\bv\in \pdiff$ or}$ $\text{$-\bv \in \pdiff$ }
\}$.
Each cell $I \in \calI$ is a polyhedron which is defined by
a minimal set $V_I \subseteq \pdiff$ that
satisfies $I = \bigcap_{\bv \in V_I}
\{\bu \mid \langle \bu, \bv \rangle \geq 0\}$.
Let
\[
	\mu^* = \min_{I \in \calI}\max_{\bu\in I \cap U}\min_{\bv \in V_I}
 |\langle \bu, \bv \rangle|. 
\]
Let $d^*_{\Phi,S}$ be the VC dimension of the set of linear classifiers
over the finite set $\pdiff$, given by
$F_U=\{ f: \bv \mapsto \sign(\langle \bu, \bv\rangle) \mid \bu \in U\}$.

Then we have the following generalization bound on the hypothesis class
of (\ref{align:our-hypo}).

\begin{theo}
\label{theo:main}
Let $\Phi: \calX \rightarrow \Hilbert$.
Suppose that for any $z \in \calX$, $\|\Phi(z)\|_\Hilbert \leq R$.
Then, for any $\rho>0$, 
with high probability 
the following holds
for any $g \in \conv(H_U)$ with
$U \subseteq \{\bu \in \Hilbert \mid \|\bu\|_\Hilbert \leq 1\}$:
\begin{align}
 \calE_D(g) \leq& \calE_{\rho}(g)
 +O\left(
 \frac{R \sqrt{d_{\Phi, S}^* \log |\allins|}}{\rho\sqrt{m}}  
\right),
\end{align}
where 
 (i) for any $\Phi$,
 $d^*_{\Phi,S}=O((R/\mu^*)^2)$,
(ii) if $\calX \subseteq \Real^\ell$ 
and $\Phi$ is the identity mapping (i.e., the associated kernel
 is the linear kernel), or
(iii) if $\calX \subseteq \Real^\ell$ 
and $\Phi$ satisfies the condition that
$\left\langle\Phi(z), \Phi(x) \right\rangle$ is monotone
decreasing with respect to $\|z-x\|_2$  (e.g.,
the mapping defined by the Gaussian kernel) and
$U=\{\Phi(z) \mid z \in \Real^\ell, \|\Phi(z)\|_\Hilbert \leq 1\}$, 
 then $d^*_{\Phi, S}=O(\min((R/\mu^*)^2, \ell))$.
\end{theo}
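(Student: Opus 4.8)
The plan is to derive the bound from a control of the empirical Rademacher complexity of the base class $H_U$, exploiting the fact that each shapelet map $\bu\mapsto h_\bu$ is piecewise linear over the arrangement $\pdiff$. First I would pass from $\conv(H_U)$ to $H_U$: since taking convex hulls does not change Rademacher complexity, $\Rdm_m(\conv(H_U))=\Rdm_m(H_U)$, and the standard margin-based generalization bound for convex combinations of a base class \citep{Mohri.et.al_FML} yields, with high probability, $\calE_D(g)\le\calE_\rho(g)+\frac{2}{\rho}\Rdm_m(H_U)+O(\sqrt{\log(1/\delta)/m})$ for all $g\in\conv(H_U)$. So it remains to prove $\Rdm_m(H_U)=O\!\big(R\sqrt{d^*_{\Phi,S}\log|\allins|}\,/\sqrt m\big)$.

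The heart of the argument is a combinatorial/concentration estimate. Fix a sample $S=(B_1,\dots,B_m)$. For $\bu\in U$, which instance of $B_i$ attains $h_\bu(B_i)=\max_{x\in B_i}\langle\bu,\Phi(x)\rangle$ is determined solely by the signs $\sign\langle\bu,\Phi(x)-\Phi(x')\rangle$ over pairs $x,x'\in B_i\subseteq\allins$; hence the whole selection pattern $(\arg\max_{x\in B_i}\langle\bu,\Phi(x)\rangle)_{i=1}^m$ is constant on each cell $I\in\calI$ of the hyperplane arrangement induced by $\pdiff$. Applying Sauer's lemma to $F_U$ on the finite set $\pdiff$ bounds the number of cells by $(e|\pdiff|/d^*_{\Phi,S})^{d^*_{\Phi,S}}$, so $\log|\calI|=O(d^*_{\Phi,S}\log|\allins|)$ because $|\pdiff|<|\allins|^2$. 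On a fixed cell $I$, writing $x^I_i$ for the selected instance, $h_\bu(B_i)=\langle\bu,\Phi(x^I_i)\rangle$, whence
\[
 m\,\widehat{\Rdm}_S(H_U)=\Expo_{\bsigma}\sup_{\bu\in U}\sum_{i=1}^m\sigma_i h_\bu(B_i)\le\Expo_{\bsigma}\max_{I\in\calI}\Big\|\sum_{i=1}^m\sigma_i\Phi(x^I_i)\Big\|_\Hilbert .
\]
For each fixed $I$ the variable $Z_I=\|\sum_i\sigma_i\Phi(x^I_i)\|_\Hilbert$ has $\Expo Z_I\le(\sum_i\|\Phi(x^I_i)\|^2)^{1/2}\le R\sqrt m$ and changes by at most $2R$ when one $\sigma_i$ flips, so it is sub-Gaussian with variance proxy $R^2 m$; a maximal inequality over the $|\calI|$ cells then gives $\Expo_{\bsigma}\max_I Z_I\le R\sqrt m\,(1+\sqrt{2\log|\calI|})$. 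Dividing by $m$, substituting the bound on $\log|\calI|$, and taking expectation over $S$ gives the claimed generalization bound.

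It then remains to bound $d^*_{\Phi,S}$ in the three regimes. For (i): every vector in $\pdiff$ has norm at most $2R$, and by construction $\mu^*$ lower-bounds the margin with which any sign pattern realized on the defining set of a cell can be witnessed by some $\bu\in U$; feeding this margin into the classical Rademacher bound for fat-shattering of bounded linear functionals ($k\mu^*\le\Expo_{\bsigma}\|\sum_{j\le k}\sigma_j\bv_j\|_\Hilbert\le 2R\sqrt k$) gives $d^*_{\Phi,S}=O((R/\mu^*)^2)$. For (ii): with $\Phi$ the identity on $\calX\subseteq\Real^\ell$, $F_U$ is the class of homogeneous linear separators in $\Real^\ell$, so $d^*_{\Phi,S}\le\ell$, and combining with (i) gives $O(\min((R/\mu^*)^2,\ell))$. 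For (iii): monotonicity of $\langle\Phi(z),\Phi(x)\rangle$ in $\|z-x\|_2$ means $\arg\max_{x\in B_i}\langle\Phi(z),\Phi(x)\rangle=\arg\min_{x\in B_i}\|z-x\|_2$, so the selection pattern is governed by the signs of the functions $\|z-x\|_2^2-\|z-x'\|_2^2$, which are affine in $z\in\Real^\ell$; the number of cells is therefore at most the number of sign patterns of $|\pdiff|$ affine functions on $\Real^\ell$, i.e.\ $|\pdiff|^{O(\ell)}$, so the $\log|\calI|$ factor in the previous step is $O(\ell\log|\allins|)$, and combining with (i) again gives $O(\min((R/\mu^*)^2,\ell))$.

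The step I expect to be the main obstacle is making the combinatorial reduction fully rigorous: precisely relating the number of realizable selection patterns (equivalently, the cell count $|\calI|$) to the VC dimension $d^*_{\Phi,S}$ at the $\mu^*$ scale rather than to the raw complexity of the arrangement, and justifying the interchange of $\max_{I\in\calI}$ with $\Expo_{\bsigma}$ through the correct sub-Gaussian maximal inequality — including the subtlety that in case (i) $\mu^*$ is defined via the minimal defining set $V_I$ of each cell, so one must verify this suffices to certify the shattering margin. The remaining case analysis is then routine.
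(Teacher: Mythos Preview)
Your proposal is correct and follows the same high-level architecture as the paper: reduce to bounding the complexity of the base class $H_U$, observe that the selection pattern $(\arg\max_{x\in B_i}\langle\bu,\Phi(x)\rangle)_i$ is constant on each cell of the arrangement induced by $\pdiff$, bound the number of such patterns by Sauer's lemma in terms of $d^*_{\Phi,S}$, and then control a maximum over finitely many linear-in-$\bu$ problems. The case analysis for (i)--(iii) also matches the paper's Theorem~\ref{theo:Theta} essentially line for line.

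The one substantive difference is the concentration step. The paper passes to the \emph{Gaussian} complexity (Lemma~\ref{lemm:RC_and_GC}), writes $\Expo_{\bsigma}\big[\sup_\theta\|\sum_i\sigma_i\Phi(\theta(B_i))\|^2\big]$ as $\Expo_{\bsigma}[\sup_\theta\bsigma^\top\kernel^{(\theta)}\bsigma]$, and bounds this by an explicit moment-generating-function calculation using the eigendecomposition of $\kernel^{(\theta)}$ and rotational invariance of the Gaussian (Lemma~\ref{lemm:GC}). You instead stay with Rademacher averages and invoke bounded differences to get sub-Gaussianity of each $Z_I=\|\sum_i\sigma_i\Phi(x_i^I)\|_\Hilbert$, followed by the standard maximal inequality over $|\calI|$ terms. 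Your route is more elementary and avoids the detour through Gaussian complexity and the eigenvalue computation; the paper's route gives slightly sharper explicit constants (the $(\sqrt2-1)$ term) but the asymptotic order is identical. The subtlety you flag about $\mu^*$ being defined via the minimal facet set $V_I$ is real, and the paper's own treatment of it in the proof of Theorem~\ref{theo:Theta}(i) is similarly brief; it does not go beyond what you sketch.
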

For space constraints, we omit the proof and it is shown in the supplementary materials.

\paragraph{Comparison with the existing bounds}
A similar generalization bound 
can be derived from 
a known bound of the Rademacher complexity of $H_U$~\citep[Theorem 20 of][]{Sabato:2012:MLA}
and a generalization bound of $\hullH$ for any hypothesis class $H$
~\citep[see Corollary 6.1 of][]{Mohri.et.al_FML}:
\[
\calE_D(g) \leq \calE_{\rho}(g) + O\left(\frac{{\log
      \left(\sum_{i=1}^m|B_i| \right)\log(m)}}{\rho\sqrt{m}} \right).
\]
Note that \citet{Sabato:2012:MLA} fixed $R=1$. 
Here, for simplicity, we omit some constants of~\citep[Theorem 20 of][]{Sabato:2012:MLA}. 
Note that $|\allins| \leq \sum_{i=1}^m|B_i|$ by definition.
The bound above is incomparable to
Theorem~\ref{theo:main} in general, as ours uses the parameter $d^*_{\Phi,S}$ and
the other has the extra $\sqrt{\log\left(\sum_{i=1}^m|B_i|
  \right)}\log(m)$ term.
However, our bound is better in terms of the sample size $m$ by the
factor of $O(\log m)$ when other parameters are regarded as constants.
\section{SL by MIL}
\subsection{Time-Series Classification with Shapelets}
In the following, we introduce a framework of time-series classification 
problem based on shapelets (i.e. SL problem). 
As mentioned in Introduction,
a time series $\btau = (\tau[1], \dots, \tau[L]) \in \Real^L$
can be identified with a bag
$B_\btau = \{(\tau[j], \ldots, \tau[j+\ell-1])
\mid 1 \leq j \leq L-\ell+1\}$ consisting of all subsequences
of $\btau$ of length $\ell$.
The learner receives a labeled sample 
$S = ((B_{\btau_1}, y_1), \ldots, (B_{\btau_m}, y_m)) \in (2^{\Real{^\ell}}
\times \{-1, 1\})^m$, 
where each labeled bag (i.e. labeled time series) 
is independently drawn 
according to some unknown distribution $D$ over a finite
support of $2^{\Real^{\ell}} \times \{-1, +1\}$.
The goal of the learner is to predict the labels of 
an unseen time series correctly.
In this way, the SL problem can be viewed as an MIL problem,
and thus we can apply our algorithms and theory.

Note that, for time-series classification, various similarity measures 
can be represented by a kernel.
For example, the Gaussian kernel (behaves like the Euclidean distance)
and Dynamic Time Warping (DTW) kernel.
Moreover, our framework can generally apply to non-real-valued sequence data 
(e.g., text, and a discrete signal) using a string kernel.

\subsection{Our Theory and Algorithms for SL}
By Theorem~\ref{theo:main}, 
we can immediately obtain the generalization bound of 
our hypothesis class in SL as follows:
\begin{coro}
Consider time-series sample $S$ of size $m$ and length $L$.
For any fixed $\ell < L$, the following generalization error bound
holds for all $g \in \conv(H_U)$ in which the length of shapelet
is $\ell$:
\[
\calE_D(g) \leq \calE_{\rho}(g) +
O\left(
 \frac{R \sqrt{d_{\Phi, S}^* \log (m(L-\ell +1))}}{\rho\sqrt{m}} \right).
\]
\end{coro}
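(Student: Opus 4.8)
The plan is to obtain the corollary as a direct specialization of Theorem~\ref{theo:main} to the multiple-instance problem induced by a time-series sample, the only substantive ingredient being a count of the number of distinct subsequences. First I would invoke the reduction recalled above: a time series $\btau_i \in \Real^L$ is identified with the bag $B_{\btau_i} = \{(\tau_i[j],\dots,\tau_i[j+\ell-1]) \mid 1 \le j \le L-\ell+1\}$ of all of its length-$\ell$ subsequences, so that a labeled time-series sample $S = ((B_{\btau_1},y_1),\dots,(B_{\btau_m},y_m))$ is precisely a labeled MIL sample of size $m$ drawn i.i.d.\ from the induced distribution $D$ over $2^{\Real^\ell} \times \{-1,1\}$, which has finite support. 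With instance space $\calX = \Real^\ell$ and the chosen kernel $K$, the hypothesis class $\conv(H_U)$ with shapelet length $\ell$ is then exactly the class to which Theorem~\ref{theo:main} applies, and the hypotheses $\|\Phi(z)\|_\Hilbert \le R$ and $U \subseteq \{\bu \in \Hilbert \mid \|\bu\|_\Hilbert \le 1\}$, together with the definition of $d^*_{\Phi,S}$ over the finite set $\pdiff$, carry over verbatim.

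Next I would bound the quantity $|\allins|$ that appears inside the logarithm in Theorem~\ref{theo:main}. By definition $\allins = P_S = \bigcup_{i=1}^m B_{\btau_i}$, and each bag $B_{\btau_i}$ contains at most $L-\ell+1$ subsequences; hence $|P_S| \le \sum_{i=1}^m |B_{\btau_i}| \le m(L-\ell+1)$.

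Finally, substituting $U \subseteq \{\bu \in \Hilbert \mid \|\bu\|_\Hilbert \le 1\}$ and $|\allins| = |P_S| \le m(L-\ell+1)$ into Theorem~\ref{theo:main} and using monotonicity of the logarithm, the error term $R\sqrt{d^*_{\Phi,S}\log|\allins|}/(\rho\sqrt m)$ is at most $R\sqrt{d^*_{\Phi,S}\log(m(L-\ell+1))}/(\rho\sqrt m)$, which is exactly the claimed inequality.

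Since every step is a substitution into an already-proved theorem, there is no real obstacle here; the only point deserving a line of care is checking that the i.i.d.\ assumption on labeled bags required by Theorem~\ref{theo:main} is inherited by the time-series setup, which holds because each labeled time series is drawn i.i.d.\ and the map sending a time series to its bag of subsequences is deterministic.
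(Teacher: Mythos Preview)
Your proposal is correct and matches the paper's approach: the paper simply states that the corollary follows immediately from Theorem~\ref{theo:main}, and you have filled in exactly the implicit step, namely that $|\allins| = |P_S| \le m(L-\ell+1)$ because each of the $m$ time series contributes at most $L-\ell+1$ length-$\ell$ subsequences to its bag.
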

To the best of our knowledge, this is the first result on the 
generalization performance
of SL. 
Note that the bound can also provide a theoretical justification
for some existing shapelet-based methods.
This is because many of the existing methods
find effective shapelets from all of the subsequences in the training sample,
and the linear convex combination of the hypothesis class using such shapelets
is a subset of the hypothesis class that we provided.

For time-series classification problem, shapelet-based classification
has a greater advantage of 
the interpretability or visibility
than the other
time-series classification methods \citep[see, e.g.,][]{Ye:2009:TSS:1557019.1557122}.
Although we use a nonlinear kernel function,
we can also observe important subsequences 
that contribute a shapelet by solving OP~\ref{align:WeakLearnSP} 
because of the sparsity (see also the experimental results).
Moreover, for unseen time-series data,
we can observe which subsequences contribute
the predicted class
by observing maximizer $x \in B$. 
\section{Experiments}
In the following experiments, we demonstrate that
our methods are practically effective
for time-series data and 
multiple-instance data.
Note that we use some heuristics for improving efficiency
of our algorithm in practice
(see details in supplementary materials).
We use $k$-means clustering in the heuristics,
and thus we show the average accuracies and standard deviations
for our results considering the  randomness of $k$-means.

\subsection{Results for Time-Series Data}
We used several binary labeled datasets\footnote{Our method is
applicable to multi-class classification tasks by easy expansion
\citep[e.g.,][]{NIPS1999_1773}}
in UCR datasets~\citep{UCRArchive}, which are often used as
benchmark datasets for time-series classification methods.
The detailed information of the datasets is described 
on the left-side of Table~\ref{tab:acc1}.
We used a weak learning problem OP~\ref{align:WeakLearnSP}
because the interpretability of the obtained classifier is required 
in shapelet-based time-series classification.
We set the hyper-parameters as follows:
Length $\ell$ of the subsequences 
($\ell$ corresponds to the dimension of instances in MIL)
was searched in
$\{0.1, 0.15, 0.2, 0.25, 0.3, 0.35, 0.4\}\times L$, 
where $L$ is the length of each time series in the dataset,
and we choose $\nu$ from $\{0.1, 0.2\}$.
We used the Gaussian kernel $K(x, x') = \exp(-\sigma\|x -
x'\|^2)$.
We choose $\sigma$ from $\{0.005, 0.01, 0.015, \ldots, 0.1 \}$.
We found good $\ell$, $\nu$, and $\sigma$ through a grid search via 
five runs of $5$-fold cross validation.
%
%
As an LP solver for 
WeakLearn and LPBoost we used the CPLEX software.
\paragraph{Accuracy and efficiency}
The classification accuracy results
are shown on the right-hand side of Table~\ref{tab:acc1}.
We referred to the experimental results reported by~\citet{Bagnall2017} 
with regard to accuracy of ST
method~\citep{Hills:2014:CTS:2597434.2597448}
as a baseline.
\citet{Bagnall2017} fairly compared many kinds of 
time-series classification methods and 
reported that ST achieved
higher accuracy than the other shapelet-based methods.
Our method performed better than ST for five datasets, 
but worse for the other six datasets. 
Our conjecture is that one reason for some of the 
worse results is that
ST methods consider all possible lengths ($1, \ldots, L$) 
of subsequences as shapelets 
without limiting the computational cost.
The main scheme in ST method is searching effective shapelets,
and the time complexity of it 
depends on $O(L^2m^4)$ \citep[see also the real computation time in][]{Hills:2014:CTS:2597434.2597448}.
We cannot compare the time complexity of 
our method with that of ST
because the time complexity of our method 
mainly depends on the LP solver (boosting converged in several tens of
iterations empirically).
Thus, we present the computation time 
per single learning with the best parameter
in the rightmost column of Table~\ref{tab:acc1}.
The experiments are done with a machine with 
Intel Core i7 CPU with 4 GHz and 32 GB memory.
The result demonstrated that our method efficiently ran in practice.
As a result, we can say that our method performed favorably with ST
while we limited the length of shapelets in the experiment.
\begin{table*}
\centering
\caption{Result for time-series datasets. Detailed
  information of the datasets, classification
  accuracies, and computation time (sec.) for our method. The accuracies of ST refer to the results of~\citep{Bagnall2017}.\label{tab:acc1}}
\begin{tabular}{|c |r|r|r ||r|r||r|r } \hline
dataset & \#train &  \#test & $L$  & ST & \Ourmethod &comp. time \\ \hline
Coffee & 28 & 28 & 286  & 0.995 & {\bf 1.000
                                                            $\pm$
                                        0.000} & 3.6\\ \hline
ECG200 & 100 & 100 & 96  & 0.840  &{\bf 0.877 $\pm$ 0.009} &15.9\\ \hline
ECGFiveDays &23 & 861 & 136  & 0.955  &{\bf 1.000
                                                            $\pm$ 0.000} &12.2\\ \hline
Gun-Point & 50 & 150 & 150   & {\bf 0.999}  & 
                                                                 0.976
                                                                 $\pm$ 0.006&4.7\\
  \hline
ItalyPower. & 67 & 1029 & 24  & {\bf 0.953}   &
                                                                   0.932
                                                                   $\pm$
                                                                   0.009&5.7\\
  \hline
MoteStrain & 20 & 1252 & 84   & {\bf 0.882}  &
                                                                  0.754
                                                                  $\pm$ 0.019&9.7
  \\
  \hline
ShapeletSim & 67 & 1029 & 24  & 0.934 & {\bf 0.994 $\pm$ 0.000}&11.0 
  \\
  \hline
SonyAIBO1 & 20 & 601 & 70   &  0.888 &   {\bf 0.944
                                                            $\pm$ 0.032}&3.8\\
  \hline
SonyAIBO2 & 20 & 953 & 65  &  {\bf 0.924} &  
                                                                  0.871
                                                                  $\pm$
                                             0.022&6.2 \\ \hline
ToeSeg.1 & 40 & 228 & 277  &  {\bf 0.954} &  
                                                                  0.911
                                                                  $\pm$ 0.025&20.2
  \\
  \hline
ToeSeg.2 & 36 & 130 & 343  &  {\bf 0.947} &  
                                                                  0.840
                                                                  $\pm$ 0.017&33.3
  \\ \hline
\end{tabular}
\end{table*}
\begin{table*}[t!]
\centering
\caption{Result for MIL datasets. Detailed
  information of the datasets, classification
  accuracies. \label{tab:acc2}}
\begin{tabular}{|c |r|r ||r|r|r| } \hline
dataset & sample size & \#dim. & mi-SVM w/ best kernel &
                                                                    MI-SVM
                                                                    w/
                                                                    best
  kernel & Ours w/ Gauss. kernel
  \\ \hline
MUSK1 & 92 & 166  &  0.834 $\pm$ 0.043 & 0.8335 $\pm$ 0.041 &
                                                                    {\bf
                                                                    0.8509
                                                                    $\pm$
                                                                    0.037} \\ \hline
MUSK2 & 102 & 166  & 0.736 $\pm$ 0.040& 0.840 $\pm$ 0.037& {\bf
                                                               0.8587
                                                               $\pm$ 0.038} \\ \hline
elephant & 200 &  230 & 0.802 $\pm$ 0.028& {\bf 0.822 $\pm$ 0.028} &
                                                                       0.8210
                                                                             $\pm$ 0.027
  \\ \hline
fox & 200 & 230 & 0.618 $\pm$ 0.035 & 0.581 $\pm$ 0.045 & {\bf
                                                              0.6505
                                                              $\pm$ 0.037} \\ \hline
tiger &200 & 230  & 0.765 $\pm$ 0.039& 0.815 $\pm$ 0.029& {\bf
                                                               0.8280
                                                               $\pm$ 0.024}  \\ \hline
\end{tabular}
\end{table*}
\paragraph{Interpretability of our method}
In order to show the interpretability of our method,
we introduce two types of visualization of our result.

One is the visualization of the 
characteristic subseqences of an input time series.
When we predict the label of the time series $B$, 
we calculate a maximizer $x^*$ in $B$ for each $h_\bu$,
that is, $x^*=\arg\max_{x \in B} \langle \bu, \Phi(x)\rangle$.
In image recognition tasks,
the maximizers are commonly used to observe
the sub-images that characterize the 
class of the input image~\citep[e.g.,][]{MI1normSVM}.
In time-series classification task, 
the maximizers also can be used
to observe some characteristic subsequences.
Figure~\ref{fig:gunpoint_maximizer}(a) is an example
of visualization of maximizers.
Each value in the
legend indicates $w_\bu \max_x \in B\langle\bu, \Phi(x)\rangle$.
That is, Subsequences with positive values 
contribute the positive class and subsequences 
with negative values contribute the negative class.
Such visualization provides
the subsequences that
characterize the class of the input time series.

The other is the visualization of a final hypothesis
$g(B) = \sum_{j=1}^tw_jh_j(B)$, where $h_j(B)=\max_{x \in B}\sum_{z_j \in
 \hatallins} \alpha_{j, z_j}K(z_j, x)$ ($\hatallins$ is the set
of representative subsequences, see details in supplementary
materials).
Figure~\ref{fig:gunpoint}(b)
is an example of visualization
of a final hypothesis obtained by \Ourmethod.
The colored lines are all the 
$z_j$s in $g$ where both $w_j$ and $\alpha_{j, z_j}$ were non-zero.
Each value of the legends shows the multiplication of 
$w_j$ and $\alpha_{j, z_j}$ corresponding to $z_j$.
That is, positive values on the colored lines indicate 
the contribution rate for the positive class, 
and negative values indicate the contribution rate 
for the negative class.
Note that, because it is difficult to visualize the shapelets
over the Hilbert space associated with the Gaussian kernel,
we plotted each of them to match the original time series based on 
the Euclidean distance.
Unlike visualization analyses using 
the existing shapelets-based methods
\citep[see, e.g., ][]{Ye:2009:TSS:1557019.1557122}, 
our visualization, colored lines and plotted position,
do not strictly represent the 
meaning of the final hypothesis because 
of the non-linear feature map.
However, we can say that the colored lines represent
``important patterns'', and certainly make important 
contributions to classification.
\begin{figure}[ht!]
\centering
\begin{tabular}{c}
\begin{minipage}{0.5\hsize}
\centering
  \includegraphics[width=37mm, height=33mm]{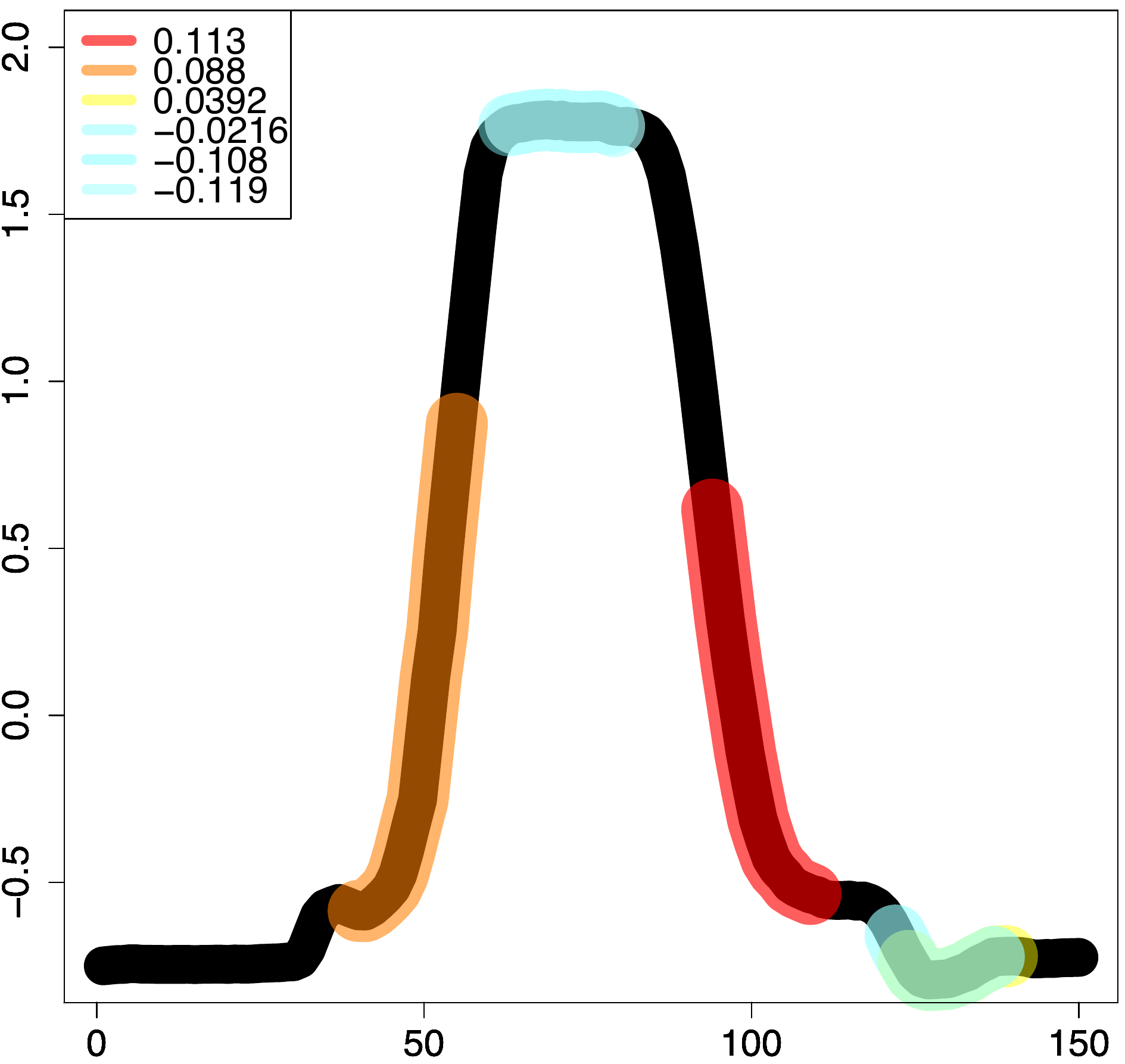}
  \\(a) 
    \end{minipage}
\begin{minipage}{0.5\hsize}
\centering
\includegraphics[width=37mm, height=33mm]{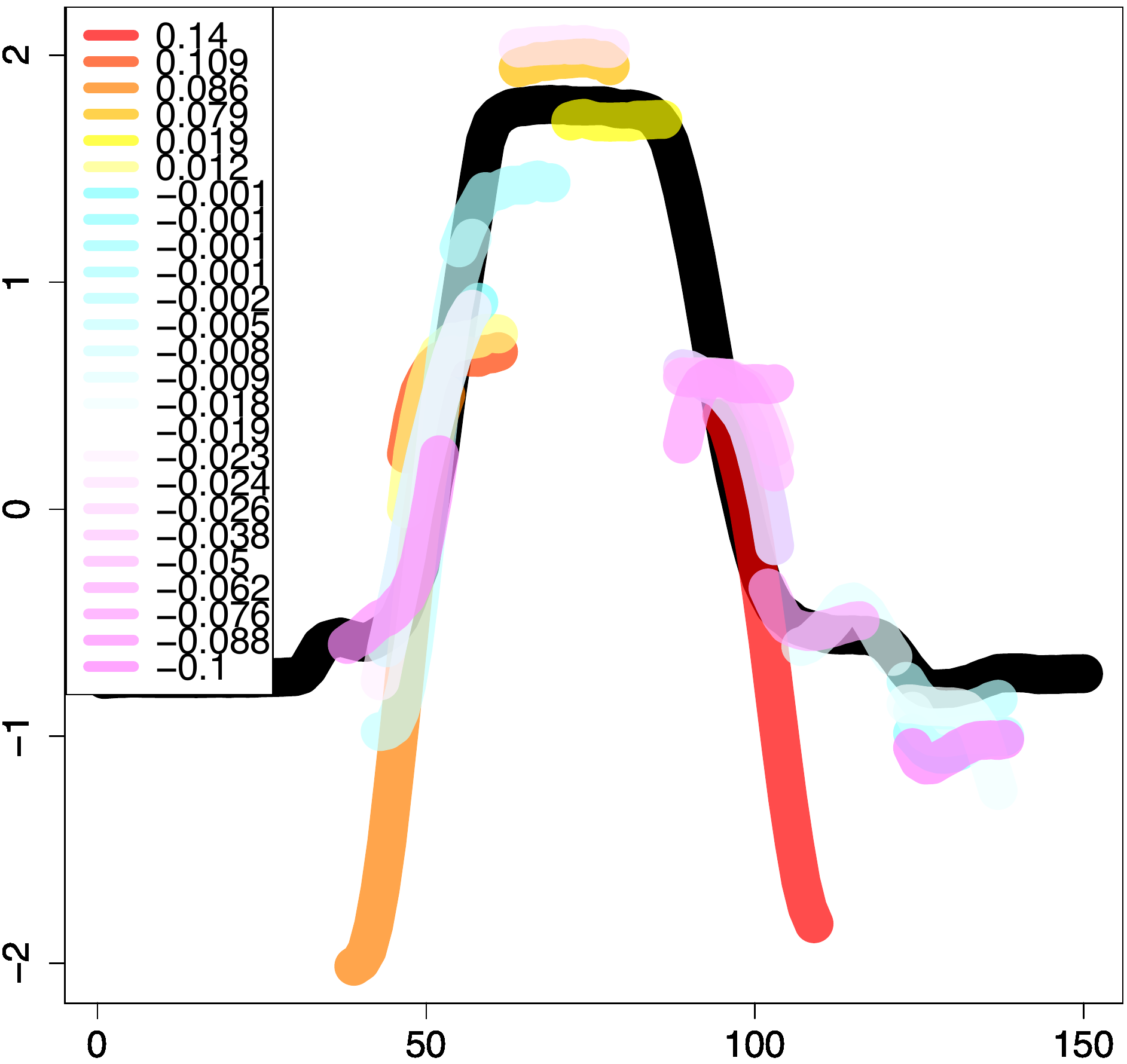}
        \hspace{1.6cm} (b) 
    \end{minipage}
\end{tabular}
\caption{Examples of the visualization for a time series of Gun-Point
   data. Both (a) and (b) images are same time series (black line)
   but the scale is different.
   (a) Each colored line is a subsequence that maximizes the
   similarity with some shapelet in a classifier. 
   Subsequences with positive values (red to yellow) contribute the positive class and
   subsequences with negative values (blue) contribute the negative
   class.
\label{fig:gunpoint_maximizer}
(b)The colored lines show important patterns of output classifier.
Positive values on the colored lines (red to yellow) 
 indicate the contribution rate for the positive class,
  and negative values (blue to purple) indicate the
  contribution rate for the negative class. \label{fig:gunpoint}}
\end{figure}
\subsection{Results for Multiple-Instance Data}
We selected the baselines of MIL algorithms as
mi-SVM and MI-SVM~\citep{NIPS2002misvm}.
Both algorithms are now classical, but still perform favorably
compared with state-of-the-art methods
for standard multiple-instance data \citep[see, e.g.,][]{doran:thesis}.
Moreover, the generalization bound of these algorithms 
are shown in~\citep{Sabato:2012:MLA}
because the algorithms obtain a (single) shapelet-based classifier.
Hence, the following comparative experiments simulate a single shapelet with
theoretical generalization ability versus infinitely many shapelets
with theoretical generalization ability.
We combined a linear, polynomial, and Gaussian kernel 
with mi-SVM and MI-SVM, respectively.
Parameter $C$ was chosen from $\{1, 10, 100, 1000, 10000\}$,
degree $p$ of the polynomial kernel is chosen from $\{2, 3, 4, 5\}$
and parameter $\sigma$ of the Gaussian kernel was chosen from $\{0.001,
0.005, 0.01, 0.05, 0.1, 0.5, 1.0\}$.
For our method, we chose $\nu$ from $\{0.5, 0.3, 0.2, 0.15, 0.1\}$,
and we only used the Gaussian kernel. 
We chose $\sigma$ from $\{0.005, 0.01, 0.05, 0.1, 0.5, 1.0 \}$.
Although we demonstrated both non-sparse and sparse weak learning,
interestingly,
sparse version beat non-sparse version for all datasets.
Thus, we will only show the result on the sparse version because of
space limitations.
For all these algorithms, we estimated optimal parameter set 
via 5-fold cross-validation.
We used well-known multiple-instance data
as shown on the left-hand side of Table~\ref{tab:acc2}.
The accuracies resulted from 10 runs of 10-fold cross-validation.

The results are shown in the right-hand side of Table~\ref{tab:acc2}.
Because of space limitations, 
for baselines
we only show the results of the kernel that achieved the best
accuracy.
Although the accuracy of our method for fox data was slightly worse, 
our method significantly outperformed 
baselines
for the other 4 datasets.


\clearpage
\appendix
\section{Supplementary materials}
\subsection{Proof of Theorem~\ref{theo:represent}}
\begin{defi}
\label{def:theta}
{\rm [The set $\Theta$ of mappings from a bag to an instance]} \\ 
Given a sample $S=(B_1, \dots, B_m)$.
For any $\bu \in U$, let $\theta_{\bu, \Phi}: \{B_1, \ldots, B_m\} \to \calX$ be a
mapping defined by
\[
\theta_{\bu, \Phi}(B_i) := \arg\max_{x \in B_i} 
 \left\langle \bu,  \Phi\left(x\right)  \right\rangle, 
\]
and we define the set of all $\theta_{\bu, \Phi}$ for $S$ as 
$\Theta_{S, \Phi} = \{\theta_{\bu, \Phi} \mid \bu \in U \}$.
For the sake of brevity, $\theta_{\bu, \Phi}$ and $\Theta_{S, \Phi}$
will be abbreviated as $\theta_{\bu}$ and $\Theta$, respectively.
\end{defi}
 \begin{proof}
We can rewrite the optimization problem (\ref{align:WeakLearn_u})
by using $\theta \in \Theta$ as follows:
\begin{align}
 \max_{\theta \in \Theta}\max_{\bu \in \Hilbert: \theta_\bu=\theta} \quad& 
\sum_{i=1}^my_id_i
\left\langle \bu,  \Phi\left(\theta(B_i)\right)  \right\rangle \\
 \nonumber
\text{sub.to} \quad& \|\bu\|_\Hilbert^2 \leq 1.
\end{align}
Thus, if we fix $\theta \in \Theta$, we have a sub-problem. Since
  the constraint $\theta=\theta_\bu$ can be written as 
 the number $|\multiallins|$ of linear constraints, 
each sub-problem is equivalent to a convex optimization.  
Indeed, each sub-problem can be written as the equivalent unconstrained
  minimization (by neglecting constants in the objective)
\begin{align*}
\min_{\bu \in \Hilbert} ~&\beta \|\bu\|^2_\Hilbert -
\sum_{i=1}^m\sum_{x \in B_i}\left(
 \eta_{i, x} \left\langle \bu, \Phi\left(\theta(B_i)\right) \right\rangle -  
\left\langle \bu, \Phi(x)  \right\rangle
\right) \\
  &-\sum_{i=1}^my_id_i
 \left\langle \bu,  \Phi\left(\theta(B_i)\right)  \right\rangle\\
\text{sub.to} ~& \langle \bu, \Phi(x) \rangle \leq \langle \bu,
                     \Phi(\theta(B_i)) \rangle \; (i \in [m], x \in B_i),
\end{align*}
 where $\beta$ and $\eta_{i,x}$ $(i \in [m], x \in B_i)$ are 
the corresponding positive constants.
Now for each sub-problem, we can apply the standard Representer Theorem
  argument (see, e.g.,~\cite{Mohri.et.al_FML}).
  Let $\Hilbert_1$ be the subspace $\{\bu \in \Hilbert \mid
 \bu=\sum_{z \in \multiallins} \alpha_{z}\Phi(z), \alpha_{z}\in \Real\}$.
 We denote $\bu_1$ as the orthogonal projection of $\bu$ onto
 $\Hilbert_1$ and any $\bu\in \Hilbert$ has the decomposition
 $\bu=\bu_1 + \bu^{\perp}$. Since $\bu^{\perp}$ is orthogonal
 w.r.t. $\Hilbert_1$, $\|\bu\|_\Hilbert^2=\|\bu_1\|_\Hilbert^2 +
 \|\bu^{\perp}\|_\Hilbert^2 \geq \|\bu_1\|_\Hilbert^2$.
On the other hand, $\left\langle \bu,  \Phi\left(z \right)
 \right\rangle =\left\langle \bu_1,  \Phi\left(z \right)  \right\rangle$.
 Therefore, the optimal solution of each sub-problem has to be contained
  in $\Hilbert_1$.
  This implies that the optimal solution, which is the maximum over all
  solutions of sub-problems, is contained in $\Hilbert_1$ as well.
 \end{proof}

\subsection{Proof of Theorem~\ref{theo:main}}
We use $\theta$ and $\Theta$ of Definition~\ref{def:theta}.

\begin{defi}
{\rm [The Rademacher and the Gaussian complexity~\cite{Bartlett:2003:RGC}]}\\
Given a sample $S=(x_1,\dots,x_m) \in \calX^m$, 
the empirical Rademacher complexity $\Rdm(H)$ of a class $H \subset
 \{h: \calX \to \Real\}$ w.r.t.~$S$
 is defined as 
 $\Rdm_S(H)=\frac{1}{m}\Expo_{\vecsigma}\left[
\sup_{h \in H}\sum_{i=1}^m \sigma_i h(x_i)
 \right]$,
 where $\vecsigma \in \{-1,1\}^m$ and each $\sigma_i$ is an independent
 uniform random variable in $\{-1,1\}$.
 The empirical Gaussian complexity $\GC_S(H)$ of $H$ w.r.t.~$S$
 is defined similarly but
 each $\sigma_i$ is drawn independently from the standard normal distribution.
\end{defi} 

The following bounds are well-known. 
\begin{lemm}
 \label{lemm:RC_and_GC}
{\rm [Lemma 4 of~\cite{Bartlett:2003:RGC}]}
 $\Rdm_S(H) =O(\GC_S(H))$.
\end{lemm}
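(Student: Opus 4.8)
The plan is to prove the single inequality $\Rdm_S(H) \le \sqrt{\pi/2}\,\GC_S(H)$, which is all that the $O(\cdot)$ statement requires, by a direct comparison of the two averages. The key numerical fact is that if $g\sim N(0,1)$ then $\Expo|g| = \sqrt{2/\pi}$. So, introducing independent standard normal variables $g_1,\dots,g_m$ that are also independent of the Rademacher signs $\sigma_1,\dots,\sigma_m$, I would rewrite each sign as $\sigma_i = \sqrt{\pi/2}\,\sigma_i\,\Expo_{\vecg}|g_i|$ and pull the constant out of the supremum to get
\[
m\,\Rdm_S(H) = \sqrt{\tfrac{\pi}{2}}\;\Expo_{\vecsigma}\sup_{h\in H}\sum_{i=1}^m \sigma_i\,\bigl(\Expo_{\vecg}|g_i|\bigr)\,h(x_i).
\]

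Next I would move the inner expectation outside the supremum by Jensen's inequality: for each fixed $\vecsigma$, the map $(t_1,\dots,t_m)\mapsto \sup_{h\in H}\sum_i \sigma_i t_i h(x_i)$ is a pointwise supremum of affine functions, hence convex, so
\[
\sup_{h}\sum_i \sigma_i\bigl(\Expo_{\vecg}|g_i|\bigr)h(x_i) \;\le\; \Expo_{\vecg}\sup_{h}\sum_i \sigma_i |g_i| h(x_i),
\]
and taking $\Expo_{\vecsigma}$ of both sides (Fubini applies by independence) preserves the bound. Finally I would use the distributional identity $(\sigma_i|g_i|)_{i=1}^m \stackrel{d}{=} (g_i)_{i=1}^m$: since $g_i$ is symmetric, attaching an independent uniform sign to $|g_i|$ returns a standard normal, and this holds jointly by independence across $i$. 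Substituting this turns the right-hand side into $\Expo_{\vecg}\sup_{h}\sum_i g_i h(x_i) = m\,\GC_S(H)$, and chaining the inequalities yields $\Rdm_S(H) \le \sqrt{\pi/2}\,\GC_S(H) = O(\GC_S(H))$.

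The main thing to be careful about is the probabilistic bookkeeping: the two families $\vecsigma$ and $\vecg$ must be set up as mutually independent so that Fubini and the joint distributional identity are valid, and one should observe (trivially, since the relevant $H$ is finite here --- e.g.\ $F_U$ is a finite set of linear classifiers on $\pdiff$ --- or at worst uniformly bounded) that the suprema are measurable and all the expectations are finite. The genuinely substantive step is the convexity that licenses Jensen; everything else is routine.
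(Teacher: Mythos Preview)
Your argument is correct and is exactly the classical proof of this comparison inequality (Jensen applied to the convex supremum, together with the symmetry identity $\sigma_i|g_i|\stackrel{d}{=}g_i$ and $\Expo|g_i|=\sqrt{2/\pi}$). The paper itself does not supply a proof of this lemma; it merely quotes it as Lemma~4 of \cite{Bartlett:2003:RGC}, and your write-up is essentially the argument found there.

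One small correction to your side remark: in the paper the lemma is applied to $H_U$ (the class of shapelet-based bag classifiers), not to the finite classifier set $F_U$ on $\pdiff$. So the ``$H$ is finite'' justification is off target; the right observation is the one you already give parenthetically, namely that $H_U$ is uniformly bounded (since $|h_\bu(B)|\le R$ whenever $\|\bu\|_\Hilbert\le 1$ and $\|\Phi(x)\|_\Hilbert\le R$), which suffices for the expectations and Fubini. Measurability of the supremum over the uncountable $H_U$ is handled in the usual way (separability of the index set, or passing to outer expectation), and the paper, like most of the literature, leaves this implicit.
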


\begin{lemm}
\label{lemm:ensemble_margin_bound}
{\rm [Corollary 6.1 of~\cite{Mohri.et.al_FML}]} 
For fixed $\rho$, $\delta >0$, 
the following bound holds with probability at least $1- \delta$:
for all $f \in \hullH$,
\[
\calE_D(f) \leq \calE_{\rho}(f) + \frac{2}{\rho} \Rdm_S(H) + 3 \sqrt{\frac{\log\frac{1}{\delta}}{2m}}.
\]
\end{lemm}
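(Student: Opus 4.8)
The plan is to establish this ensemble margin bound along the classical margin-based route: dominate the $0/1$ error by a Lipschitz surrogate, apply a generic Rademacher uniform-convergence bound to the induced loss class, remove the surrogate by a contraction argument, and finally exploit the fact that passing to the convex hull leaves the Rademacher complexity unchanged. Throughout, $\Rdm_S$ is the empirical Rademacher complexity of the definition above, and I treat the sample as a set of labeled points $(x_i,y_i)$ with $y_i\in\{-1,1\}$.

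First I would introduce the ramp (margin) loss $\phi_\rho:\Real\to[0,1]$ given by $\phi_\rho(t)=1$ for $t\le 0$, $\phi_\rho(t)=1-t/\rho$ for $0<t<\rho$, and $\phi_\rho(t)=0$ for $t\ge\rho$. This function is $1/\rho$-Lipschitz and sandwiches the step function: $\indctr{t\le 0}\le\phi_\rho(t)\le\indctr{t<\rho}$ for every $t$. Evaluating the left inequality at $t=y\,f(x)$ and taking the expectation over $D$ gives $\calE_D(f)\le\Expo_{(x,y)\sim D}[\phi_\rho(y\,f(x))]$, while the right inequality, averaged over the sample, gives $\tfrac1m\sum_{i=1}^m\phi_\rho(y_i f(x_i))\le\calE_{\rho}(f)$. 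Thus it suffices to prove a uniform bound relating $\Expo_D[\phi_\rho(y f)]$ to its empirical counterpart over all $f\in\hullH$.

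Next I would apply the standard Rademacher generalization bound for a family of $[0,1]$-valued functions (obtained from McDiarmid's inequality and symmetrization) to the loss class $\calG=\{(x,y)\mapsto\phi_\rho(y\,f(x))\mid f\in\hullH\}$: with probability at least $1-\delta$, every $g\in\calG$ satisfies $\Expo_D[g]\le\tfrac1m\sum_i g(x_i,y_i)+2\Rdm_S(\calG)+3\sqrt{\log(1/\delta)/(2m)}$. To control $\Rdm_S(\calG)$ I would invoke Talagrand's contraction lemma: since $\phi_\rho$ is $1/\rho$-Lipschitz, $\Rdm_S(\calG)\le\tfrac1\rho\,\Rdm_S(\{(x,y)\mapsto y\,f(x)\mid f\in\hullH\})$; and because $y_i\in\{-1,1\}$ while each $\sigma_i$ is a symmetric sign, the product $\sigma_i y_i$ has the same distribution as $\sigma_i$, so the labels disappear from the supremum and this equals $\tfrac1\rho\,\Rdm_S(\hullH)$.

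The final, and most ensemble-specific, ingredient is the identity $\Rdm_S(\hullH)=\Rdm_S(H)$. For any $f=\sum_k w_k h_k\in\conv(H)$ with $w_k\ge 0$ and $\sum_k w_k=1$, linearity gives $\sum_i\sigma_i f(x_i)=\sum_k w_k\sum_i\sigma_i h_k(x_i)\le\sup_{h\in H}\sum_i\sigma_i h(x_i)$, so taking the supremum over $\conv(H)$ and then the expectation over $\vecsigma$ yields $\Rdm_S(\hullH)\le\Rdm_S(H)$; the reverse inequality is immediate from $H\subseteq\conv(H)$. Chaining the sandwich inequalities, the Rademacher bound, the contraction step, and this identity produces $\calE_D(f)\le\calE_{\rho}(f)+\tfrac2\rho\Rdm_S(H)+3\sqrt{\log(1/\delta)/(2m)}$ for all $f\in\hullH$, as stated. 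I expect no deep obstacle here, since every step is standard; the one piece I would cite rather than reprove is Talagrand's contraction lemma, and the only calculation requiring minor care is the symmetrization that strips the labels. The convex-hull invariance and the surrogate sandwich are routine.
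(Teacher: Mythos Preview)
Your argument is correct and is precisely the standard proof of this margin bound (ramp-loss surrogate, McDiarmid plus symmetrization, Talagrand contraction, and the convex-hull invariance $\Rdm_S(\conv(H))=\Rdm_S(H)$). Note, however, that the paper itself does not supply a proof of this lemma: it is stated with attribution to Corollary~6.1 of \cite{Mohri.et.al_FML} and used as a black box in the proof of Theorem~\ref{theo:main}, so there is no ``paper's own proof'' to compare against---what you have written is essentially the textbook derivation being cited.
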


To derive generalization bound based on the Rademacher or the Gaussian
complexity is quite standard in the statistical learning theory
literature and applicable to our classes of interest as well.
However, a standard analysis provides us sub-optimal bounds. 

\begin{lemm}
\label{lemm:GC}
Suppose that for any $z \in \calX$, $\|\Phi(z)\|_\Hilbert \leq R$.
Then, the empirical Gaussian complexity of $H_U$ with respect to $S$
for $U \subseteq \{\bu \mid \|\bu\|_\Hilbert \leq 1\}$
is bounded as follows: 
\[
\GC_{S}(H) \leq \frac{R\sqrt{(\sqrt{2}-1)+ 2(\ln|\Theta|)}}{\sqrt{m}}.
\]
\end{lemm}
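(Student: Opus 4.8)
The plan is to use the finiteness of the set $\Theta$ from Definition~\ref{def:theta} to collapse the supremum over the infinite ball $U$ into a maximum over $|\Theta|$ linear functionals, and then bound that maximum by a standard exponential‑moment (union‑bound) argument over $\Theta$. Concretely: fixing $\vecsigma$, and using that $\theta_\bu(B_i)$ attains $\max_{x\in B_i}\langle\bu,\Phi(x)\rangle$ so that $h_\bu(B_i)=\langle\bu,\Phi(\theta_\bu(B_i))\rangle$, we have $\sum_i\sigma_ih_\bu(B_i)=\langle\bu,\sum_i\sigma_i\Phi(\theta_\bu(B_i))\rangle\le\max_{\theta\in\Theta}\langle\bu,\sum_i\sigma_i\Phi(\theta(B_i))\rangle$; taking $\sup_{\bu\in U}$ and using $U\subseteq\{\bu:\|\bu\|_\Hilbert\le1\}$,
\[
\sup_{\bu\in U}\sum_{i=1}^m\sigma_ih_\bu(B_i)\ \le\ \max_{\theta\in\Theta}\Big\|\sum_{i=1}^m\sigma_i\Phi(\theta(B_i))\Big\|_\Hilbert\ =:\ \max_{\theta\in\Theta}Z_\theta .
\]
Taking expectations gives $\GC_S(H)\le\frac1m\Expo_{\vecsigma}[\max_{\theta\in\Theta}Z_\theta]$, and $\Theta$ is finite (each $\theta$ picks one instance from each of the $m$ bags), so this is a maximum over finitely many random variables.

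Next I would control each $Z_\theta$ individually. Since the $\sigma_i$ are i.i.d.\ with mean $0$ and variance $1$, $\Expo_{\vecsigma}[Z_\theta^2]=\sum_{i=1}^m\|\Phi(\theta(B_i))\|_\Hilbert^2\le mR^2$, hence $\Expo_{\vecsigma}[Z_\theta]\le R\sqrt m$ by Jensen. Moreover $\vecsigma\mapsto Z_\theta$ is Lipschitz in the Euclidean norm with constant at most $R\sqrt m$, because $\big|\,\|\sum_ia_i\Phi(\theta(B_i))\|_\Hilbert-\|\sum_ia_i'\Phi(\theta(B_i))\|_\Hilbert\,\big|\le\sum_i|a_i-a_i'|\,\|\Phi(\theta(B_i))\|_\Hilbert\le R\sqrt m\,\|a-a'\|_2$. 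By the Gaussian concentration inequality for Lipschitz functions, $Z_\theta-\Expo[Z_\theta]$ is sub‑Gaussian with variance proxy $R^2m$, so for every $\lambda>0$,
\[
\Expo_{\vecsigma}\!\left[e^{\lambda Z_\theta}\right]\ \le\ e^{\lambda\,\Expo[Z_\theta]+\lambda^2R^2m/2}\ \le\ e^{\lambda R\sqrt m+\lambda^2R^2m/2}.
\]

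Finally I would assemble the pieces: for any $\lambda>0$, Jensen applied to $x\mapsto e^{\lambda x}$ and a union bound over $\Theta$ give $\Expo_{\vecsigma}[\max_{\theta}Z_\theta]\le\frac1\lambda\ln\big(\sum_{\theta}\Expo_{\vecsigma}[e^{\lambda Z_\theta}]\big)\le\frac{\ln|\Theta|}{\lambda}+R\sqrt m+\frac{\lambda R^2m}{2}$; minimizing over $\lambda$ (at $\lambda=\sqrt{2\ln|\Theta|}/(R\sqrt m)$) and dividing by $m$ gives the asserted bound, the constant inside the square root absorbing the contribution of the $\Expo[Z_\theta]\le R\sqrt m$ term. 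The step I expect to be the main obstacle is exactly this last one: any crude handling of $\max_\theta Z_\theta$ (for instance $\max_\theta Z_\theta\le\sum_\theta Z_\theta$, or Jensen on $\max_\theta Z_\theta^2\le\sum_\theta Z_\theta^2$) only yields a $|\Theta|$ or $\sqrt{|\Theta|}$ dependence, which becomes useless once $\ln|\Theta|$ is in turn bounded by the hyperplane‑arrangement estimate $d^*_{\Phi,S}\log|\allins|$ used in Theorem~\ref{theo:main}. Obtaining the logarithmic dependence forces the exponential‑moment route, hence the need for both the second‑moment identity $\Expo[Z_\theta^2]\le mR^2$ and the Gaussian–Lipschitz concentration, and then a careful balance of the two $\lambda$‑dependent terms to land on the stated constants.
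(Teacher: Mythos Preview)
Your approach is correct and yields a bound of the right order, but it takes a genuinely different route from the paper and does not recover the exact constant $(\sqrt{2}-1)$ in the statement. The paper first squares via Jensen, $\GC_S(H)\le\frac{1}{m}\sqrt{\Expo_{\vecsigma}[\max_\theta Z_\theta^2]}$, and then bounds the MGF of the quadratic form $Z_\theta^2=\vecsigma^\top\kernel^{(\theta)}\vecsigma$ by diagonalizing $\kernel^{(\theta)}$, using Gaussian rotational invariance, and computing the chi-square MGF $\Expo[e^{c\lambda_k\sigma_k^2}]=(1-2c\lambda_k)^{-1/2}$ explicitly; the constant $(\sqrt{2}-1)$ arises from the elementary inequality $(1-x)^{-1/2}\le1+2(\sqrt{2}-1)x$ on $[0,\tfrac12]$ together with the choice $c=1/(2\lambda_1^{(\theta)})$, and the trace term $\sum_k\lambda_k^{(\theta)}\le mR^2$ supplies the additive piece. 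Your route via Gaussian--Lipschitz concentration on $Z_\theta$ itself avoids the eigenvalue machinery and is conceptually cleaner, but the standard maximal inequality gives only $\Expo[\max_\theta Z_\theta]\le R\sqrt m\,(1+\sqrt{2\ln|\Theta|})$, hence $\GC_S(H)\le\frac{R}{\sqrt m}(1+\sqrt{2\ln|\Theta|})\le\frac{R\sqrt{2+4\ln|\Theta|}}{\sqrt m}$ --- the same order but strictly worse constants. So your remark that the constants ``absorb'' to give the asserted bound is not quite accurate: you prove the lemma with a larger absolute constant, which is harmless for Theorem~\ref{theo:main} (where everything sits inside a big-$O$) but does not literally establish the displayed inequality.
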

\begin{proof} 
Since $U$ can be partitioned into
	$\bigcup_{\theta \in \Theta} \{ \bu \in U \mid \theta_\bu = \theta\}$,
\begin{align}
\label{eq:gauss1}
\nonumber
\GC_{S}(H_U) &= \frac{1}{m} \Expo_{\bsigma} \left[\sup_{\theta \in \Theta} 
\sup_{\bu \in U:\theta_{\bu}=\theta} \sum_{i=1}^m \sigma_i
\left\langle \bu, \Phi\left(\theta(B_i)\right)
             \right\rangle\right] \\ \nonumber
&= \frac{1}{m} \Expo_{\bsigma} \left[\sup_{\theta \in \Theta} 
\sup_{\bu \in U:\theta_{\bu}=\theta} \left\langle \bu,
 \left(\sum_{i=1}^m \sigma_i \Phi\left(\theta(B_i)\right)
  \right) \right \rangle \right] \\ \nonumber
&\leq \frac{1}{m} \Expo_{\bsigma} \left[\sup_{\theta \in \Theta} 
\sup_{\bu \in U} \left\langle \bu, 
\left(\sum_{i=1}^m \sigma_i \Phi\left(\theta(B_i)\right)
  \right) \right \rangle \right] \\ \nonumber
  &\leq \frac{1}{m} \Expo_{\bsigma} \left[
  \sup_{\theta \in \Theta} \left\| \sum_{i=1}^m \sigma_i \Phi \left(
  \theta(B_i)\right) \right\|_\Hilbert
  \right]\\ \nonumber
  &= \frac{1}{m} \Expo_{\bsigma} \left[
  \sup_{\theta \in \Theta} \sqrt{\left\|\sum_{i=1}^m \sigma_i
  \Phi\left(\theta(B_i)\right)\right\|_\Hilbert^2}
  \right] \\ \nonumber
  &= \frac{1}{m} \Expo_{\bsigma} \left[
  \sqrt{\sup_{\theta \in \Theta} \left\|\sum_{i=1}^m \sigma_i
  \Phi\left(\theta(B_i)\right)\right\|_\Hilbert^2}
  \right]\\ 
& \leq \frac{1}{m} \sqrt{ 
  \Expo_{\bsigma} \left[
  \sup_{\theta \in \Theta} \left\|\sum_{i=1}^m \sigma_i
  \Phi\left(\theta(B_i)\right)\right\|_\Hilbert^2
  \right]}.
\end{align}
The first inequality is derived from the relaxation of $\bu$,
the second inequality is due to Cauchy-Schwarz inequality and
the fact $\|\bu\|_{\Hilbert} \leq 1$, and
the last inequality is due to Jensen's inequality.
We denote by $\kernel^{(\theta)}$ the kernel matrix such that 
$\kernel_{ij}^{(\theta)} = \langle \Phi((\theta(B_i)),
\Phi(\theta(B_j)) \rangle$.
 Then, we have
 \begin{align}
  \Expo_{\bsigma} \left[
  \sup_{\theta \in \Theta} \left\|\sum_{i=1}^m \sigma_i
  \Phi\left(\theta(B_i)\right)\right\|_\Hilbert^2
  \right]
  =
  \Expo_{\bsigma}\left[ \sup_{\theta \in \Theta}  
\sum_{i,j=1}^m \sigma_i\sigma_j 
\kernel_{ij}^{(\theta)}
\right].
 \end{align}
We now derive an upper bound of the r.h.s. as follows.
 
For any $c>0$, 
\begin{align*}
&\exp\left(
c \Expo_{\bsigma}\left[ \sup_{\theta \in \Theta}  
\sum_{i,j=1}^m \sigma_i\sigma_j 
\kernel_{ij}^{(\theta)}
\right]
\right) \\
&\leq 
\Expo_{\bsigma}\left[\exp\left(
c \sup_{\theta \in \Theta}  
\sum_{i,j=1}^m \sigma_i\sigma_j 
\kernel_{ij}^{(\theta)}
\right)
\right] \\
= &
\Expo_{\bsigma}\left[
\sup_{\theta \in \Theta}  
\exp\left(c
\sum_{i,j=1}^m \sigma_i\sigma_j 
\kernel_{ij}^{(\theta)}
\right)
\right] \\
&\leq 
\sum_{\theta \in \Theta}
\Expo_{\bsigma}\left[
\exp\left(c
\sum_{i,j=1}^m \sigma_i\sigma_j 
\kernel_{ij}^{(\theta)}
\right)
\right]
\end{align*}
The first inequality is due to Jensen's inequality, and
the second inequality is due to the fact that the supremum is
bounded by the sum.
By using the symmetry property of $\kernel^{(\theta)}$, we have
$\sum_{i,j=1}^m\sigma_i\sigma_j\kernel_{ij}^{(\theta)}=\T{\bsigma}\kernel^{(\theta)}\bsigma$,
which is rewritten as
\[
\T{\bsigma}\kernel^{(\theta)}\bsigma = \T{(\T{\Vmat}\bsigma)}
\left( \begin{array}{ccc}
\lambda_1^{(\theta)} & \hfill & 0  \\
\hfill & \ddots & \hfill  \\
0 & \hfill & \lambda_m^{(\theta)} \\
\end{array} \right)
\T{\Vmat}\bsigma,
\]
where
$\lambda_1^{(\theta)}\geq \dots \geq \lambda_m^{(\theta)}\geq 0$ are the
eigenvalues of $\kernel^{(\theta)}$ and 
$\Vmat = (\bv_1, \ldots, \bv_m)$
is the orthonormal matrix such that $\bv_i$ is the eigenvector
that corresponds to the eigenvalue $\lambda_i$.
By the reproductive property of Gaussian distribution,
$\T{\Vmat} \bsigma$ obeys the same Gaussian distribution as well.
So,
\begin{align*}
&\sum_{\theta \in \Theta}
\Expo_{\bsigma}\left[
\exp\left(c
\sum_{i,j=1}^m \sigma_i\sigma_j 
\kernel_{ij}^{(\theta)}
\right)
\right] \\
=&
\sum_{\theta \in \Theta}
\Expo_{\bsigma}\left[
\exp\left(c
\T{\bsigma}\kernel^{(\theta)}\bsigma
\right)
\right] \\
= &
\sum_{\theta \in \Theta}
\Expo_{\bsigma}\left[
\exp\left(c
\sum_{k=1}^m\lambda_k^{(\theta)}(\T{\bv_k}\bsigma)^2
\right)
\right] \\
= &
\sum_{\theta \in \Theta}
\Pi_{k=1}^m
\Expo_{\sigma_k}\left[
\exp\left(c
\lambda_k^{(\theta)}\sigma_k^2
\right)
\right] ~~(\text{replace}~\bsigma = \T{\bv_k}\bsigma)\\
 = &
 \sum_{\theta \in \Theta}
\Pi_{k=1}^m
 \left(
 \int_{-\infty}^{\infty}
 \exp\left(
 c \lambda_k^{(\theta)}\sigma^2
 \right)
  \frac{\exp(-\sigma^2)}{\sqrt{2\pi}}d\sigma
\right) \\
 = &
\sum_{\theta \in \Theta}
\Pi_{k=1}^m
 \left(
  \int_{-\infty}^{\infty}
  \frac{\exp(-(1-c\lambda_k^{(\theta)})\sigma^2)}{\sqrt{2\pi}}d\sigma\right).
\end{align*}
Now we replace $\sigma$ by $\sigma' =
\sqrt{1-c\lambda_k^{(\theta)}}\sigma$. Since
$d\sigma'=\sqrt{1-c\lambda_k^{(\theta)}}d\sigma$,
we have:
\begin{align*}
&\int_{-\infty}^{\infty}
\frac{\exp(-(1-c\lambda_k^{(\theta)})\sigma^2)}{\sqrt{2}\pi}d\sigma 
= \frac{1}{\sqrt{2\pi}} \int_{-\infty}^{\infty} 
\frac{\exp(-\sigma'^2)}{\sqrt{1-c\lambda_k^{(\theta)}}}d\sigma' \\
= &\frac{1}{\sqrt{1-c\lambda_k^{(\theta)}}}.
\end{align*}
 Now,  applying the inequality that $\frac{1}{\sqrt{1-x}} \leq
 1+2(\sqrt{2}-1)x$ for  $0 \leq x \leq \frac{1}{2}$, 
 the bound becomes
 \begin{align}
\label{align:ineq1}
\nonumber
 &\exp\left(
c \Expo_{\bsigma}\left[ \sup_{\theta \in \Theta}  
\sum_{i,j=1}^m \sigma_i\sigma_j 
\kernel_{ij}^{(\theta)}
\right]
  \right) \\
\leq
&\sum_{\theta \in \Theta}
\Pi_{k=1}^m
  \left(
1+2(\sqrt{2}-1)c\lambda_k^{(\theta)} + 2\lambda_1
  \right).
 \end{align}
Further, taking logarithm, dividing the both sides by $c$,
letting $c=\frac{1}{2
  \max_{k}\lambda_k^{(\theta)}}=1/(2\lambda_1^{(\theta)})$,
fix $\theta = \theta^*$ such that $\theta^*$ maximizes (\ref{align:ineq1}),
 and applying
 $\ln(1+x) \leq x$, we get:
\begin{align}
\label{eq:gauss2}
\nonumber
&\Expo_{\bsigma}\left[ \sup_{\theta \in \Theta}  
\sum_{i,j=1}^m \sigma_i\sigma_j 
\kernel_{ij}^{(\theta^*)}
\right]
\\ \nonumber
&\leq
 (\sqrt{2}-1)\sum_{k=1}^m \lambda_k^{(\theta^*)}  + 2\lambda_1^{(\theta^*)} \ln |\Theta|\\
\nonumber
 &=
 (\sqrt{2}-1)\tr(\kernel^{(\theta^*)}) + 2\lambda_1^{(\theta^*)} \ln |\Theta|\\
 &\leq
 (\sqrt{2}-1)m R^2 + 2 mR^2 \ln |\Theta|,
\end{align}
 where the last inequality holds since $\lambda_1^{(\theta^*)}=\|\kernel^{(\theta^*)}\|_2 \leq m
 \|\kernel^{(\theta)}\|_{\max} \leq R^2$.
By equation~(\ref{eq:gauss1}) and (\ref{eq:gauss2}), 
we have:
\begin{align*}
\GC_S(H) 
&\leq
\frac{1}{m}
\sqrt{
\Expo_{\bsigma}
\left[\sup_{\theta \in \Theta}  
\sum_{i,j=1}^m \sigma_i\sigma_j 
\kernel_{ij}^{(\theta)}
\right]} \\
&\leq 
\frac{R \sqrt{(\sqrt{2}-1) + 2\ln 
|\Theta|
}}{\sqrt{m}}.
\end{align*}
\end{proof}

Thus, it suffices to bound the size $|\Theta|$. 
The basic idea to get our bound is the following geometric analysis.
Fix any $i \in [m]$ and consider points $\{\Phi(x) \mid x \in B_i\}$.
Then, we define equivalence classes of $\bu$ such that
$\theta_\bu(i)$ is in the same class, which define a Voronoi diagram
for the points $\{\Phi(x) \mid x \in B_i\}$.
Note here that the similarity is measured by the inner product, not a
distance. More precisely, 
let $V(B_i) = \{V(x) \mid x \in B \}$ be the Voronoi diagram,
each of the region is defined as 
$V(x) = \{\bu \in \Hilbert \mid \theta_{\bu}(B_i)=x\}$
Let us consider the set of intersections
$\bigcap_{i \in [m]}V_i{(x_i)}$ for all combinations of
$(x_1, \ldots, x_m) \in B_1 \times \cdots \times B_m $.
The key observation is that each non-empty intersection corresponds to a
mapping $\theta_{\bu} \in \Theta$. Thus,
we obtain $|\Theta|=(\text{the number of intersections $\bigcap_{i\in
[m]}V_{i}(x_i)$})$. In other words, the size of $\Theta$ is exactly
the number of rooms defined by the intersections of $m$ Voronoi
 diagrams $V_1,\dots,V_m$.
 From now on, we will derive upper bound based on this observation.

 \begin{lemm}
  \label{lemm:Theta}
 \[
  |\Theta| =O(|\multiallins|^{2d_{\Phi,S}^*}).
 \]
 \end{lemm}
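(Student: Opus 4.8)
The plan is to identify $|\Theta|$ with (an upper bound on) the number of cells of the central hyperplane arrangement generated by $\pdiff$, and then to bound that number by the Sauer--Shelah lemma applied to the dual class $F_U$. Concretely, recall from the discussion preceding the lemma that $\theta_\bu$ is determined by, for each bag $B_i$ and each pair of distinct instances $x,x'\in B_i$, the sign of $\langle\bu,\Phi(x)-\Phi(x')\rangle$; hence $\theta_\bu$ depends on $\bu$ only through which cell of the arrangement $\{\{\bu\in\Hilbert\mid\langle\bv,\bu\rangle=0\}\mid\bv\in\pdiff\}$ contains $\bu$. Since the within-bag difference vectors form a subset of $\pdiff$, passing to the full arrangement over all of $\pdiff$ only over-counts, so $|\Theta|\le|\calI|$, the number of cells of that arrangement.

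Next I would pass from cells to realizable sign patterns. Each full-dimensional cell $I$ is determined by the sign vector $\left(\sign\langle\bu,\bv\rangle\right)_{\bv\in\pdiff}\in\{-1,+1\}^{\pdiff}$ that is constant on $\mathrm{int}(I)$, and distinct cells give distinct sign vectors; since $U$ contains a ball around the origin and signs are scale-invariant, every such sign vector is realized by some $\bu\in U$. Therefore $|\calI|$ is at most the number of distinct labelings (dichotomies) of the finite point set $\pdiff$ induced by the linear classifiers $\bv\mapsto\sign\langle\bu,\bv\rangle$, i.e.\ by the class $F_U$. Writing $N=|\pdiff|\le|\multiallins|(|\multiallins|-1)\le|\multiallins|^2$ and recalling that $F_U$ has VC dimension $d^*_{\Phi,S}$ by definition, the Sauer--Shelah lemma gives that this number of dichotomies is at most $\sum_{i=0}^{d^*_{\Phi,S}}\binom{N}{i}=O\!\left(N^{d^*_{\Phi,S}}\right)=O\!\left(|\multiallins|^{2d^*_{\Phi,S}}\right)$, which is the claimed bound. (The degenerate case $N<d^*_{\Phi,S}$ is harmless, since then the count is at most $2^{N}\le 2^{d^*_{\Phi,S}}\le|\multiallins|^{2d^*_{\Phi,S}}$ whenever $|\multiallins|\ge 2$.)

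The counting steps above are routine; the part that needs the most care is the reduction in the first paragraph, specifically the treatment of ties — points $\bu$ lying on one or more arrangement hyperplanes, where two instances in a bag attain the same inner product. I would dispose of this by fixing once and for all a total order $\prec$ on the instances of $\multiallins$ and defining $\theta_\bu(B_i)$ to be the $\prec$-largest element of $\arg\max_{x\in B_i}\langle\bu,\Phi(x)\rangle$; then a short perturbation argument (pushing $\bu$ slightly in a direction that strictly separates every tied pair in the $\prec$-consistent way) shows that every value $\theta_\bu$ is already attained on some open cell, so $|\Theta|$ is bounded by the number of full-dimensional cells, and the estimate above applies. Alternatively, and more crudely, one may simply bound $|\Theta|$ by the total number of faces of all dimensions of the arrangement, which is of the same order; either route yields $|\Theta|=O(|\multiallins|^{2d^*_{\Phi,S}})$.
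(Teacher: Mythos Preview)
Your proof is correct and follows essentially the same approach as the paper: both reduce $|\Theta|$ to counting cells of the central hyperplane arrangement generated by $\pdiff$ (the paper phrases this via Voronoi diagrams and a point--hyperplane ``duality,'' you via sign patterns), and then bound that count by Sauer--Shelah applied to $F_U$ on $N\le|\multiallins|^2$ points. Your explicit handling of ties is more careful than the paper's, and the detour through $|\calI|$ is harmless here since $U$ is the unit ball; note, though, that you could bypass the ball assumption entirely by observing directly that $\theta_\bu$ is determined by the $F_U$-dichotomy of $\pdiff$ that $\bu$ induces, so $|\Theta|$ is bounded by the number of such dichotomies without ever invoking $|\calI|$.
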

\begin{proof}
 We will reduce the problem of counting intersections of the Voronoi
 diagrams to that of counting possible labelings by hyperplanes for
 some set.
 Note that for each neighboring Voronoi regions, the border is a part of
 hyperplane since the closeness is defined in terms of the inner
 product.
 Therefore, by simply extending each border to a hyperplane, we obtain
 intersections of halfspaces defined by the extended hyperplanes.
 Note that, the size of these intersections gives an upper bound of
 intersections of the Voronoi diagrams.
 More precisely, we draw hyperplanes for each pair of points in
 $\Phi(\multiallins)$
 so that each point on
 the hyperplane has the same inner product between two points.
 Note that for each pair $\Phi(z), \Phi(z') \in \multiallins$, the normal vector of
 the hyperplane is given as $\Phi(z) -\Phi(z')$ (by fixing the sign arbitrary).
 Thus, the set of hyperplanes obtained by this procedure is exactly
 $\pdiff$. The size of $\pdiff$ is ${|\multiallins|} \choose 2$, which is at most $|\multiallins|^2$.
 Now, we consider a ``dual'' space by viewing each hyperplane as a point and each point in $U$ as a
 hyperplane.
 Note that points $\bu$ (hyperplanes in the dual) in an intersection 
 give the same labeling on the points in the dual domain.
 Therefore, the number of intersections in the original domain is the
 same as the number of the possible labelings on $\pdiff$ by hyperplanes
 in $U$. By the classical Sauer's
 Lemma and the VC dimension of hyperplanes~(see, e.g., Theorem 5.5
 in~\cite{LK02}), the size is at most $O((|\multiallins|^2)^{d_{\Phi,S}^*})$.
\end{proof}


  \begin{theo}\noindent \mbox\\
   \label{theo:Theta} 
  \begin{enumerate}
   \item[(i)] For any $\Phi$, $|\Theta|=O(|\multiallins|^{8(R/\mu^*)^2})$.
   \item[(ii)] if $\calX \subseteq \Real^\ell$ and $\Phi$ is the identity mapping over $\multiallins$, then
	      $|\Theta|=O( |\multiallins|^{\min\{8(R/\mu^*)^2, 2\ell \}}\})$.
   \item[(iii)] if $\calX \subseteq \Real^\ell$ and $\Phi$ satisfies that $\left\langle\Phi(z), \Phi(x) \right\rangle$ is monotone
decreasing with respect to $\|z-x\|_2$  (e.g.,
	the mapping defined by the Gaussian kernel) and
 $U=\{\Phi(z) \mid z \in \calX \subseteq \Real^\ell, \|\Phi(z)\|_\Hilbert \leq 1\}$, 
then $|\Theta|=O( |\multiallins|^{\min\{8(R/\mu^*)^2, 2\ell \}}\})$.
  \end{enumerate}
  \end{theo}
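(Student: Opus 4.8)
The plan is to deduce all three estimates from Lemma~\ref{lemm:Theta}, which already gives $|\Theta| = O(|\multiallins|^{2d^*_{\Phi,S}})$; so it suffices to bound the VC dimension $d^*_{\Phi,S}$ of $F_U$ over $\pdiff$ in each of the three situations. I will use throughout that $\|\bv\|_\Hilbert \le 2R$ for every $\bv \in \pdiff$, since $\bv = \Phi(z)-\Phi(z')$ with $\|\Phi(z)\|_\Hilbert, \|\Phi(z')\|_\Hilbert \le R$.

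For part (i) I would prove $d^*_{\Phi,S} \le 4(R/\mu^*)^2$ by turning VC-shattering into fat-shattering at scale $\mu^*$. Take $T \subseteq \pdiff$ shattered by $F_U$ and fix a dichotomy $\epsilon \in \{-1,+1\}^T$. Since every one of the $2^{|T|}$ sign patterns on $T$ is realized by some vector of $\Hilbert$, the set $T$ must be linearly independent; using this I would choose a witness $\bu$ realizing $\epsilon$ that lies arbitrarily close to $\bigcap_{\bv \in T}\{\bu : \langle\bu,\bv\rangle = 0\}$ while staying bounded away from every hyperplane of the arrangement not forced through that subspace (scaling $\bu$ into $U$ is harmless because every cell of $\calI$ is a cone). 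For such a $\bu$, the cell $I \in \calI$ containing it has each hyperplane of $T$ among its bounding facets, i.e. $T \subseteq V_I$ after identifying $\bv$ with $-\bv$. Then the definition of $\mu^*$ supplies $\bu_I \in I \cap U$ with $|\langle\bu_I,\bv\rangle| \ge \mu^*$ for all $\bv \in V_I \supseteq T$, and $\bu_I$ induces exactly the sign pattern $\epsilon$ on $T$ since $\bu_I \in I$. Hence $T$ is $\mu^*$-shattered by unit-norm functionals with zero offsets, and the classical Rademacher argument (for a typical sign vector $\epsilon$, $\mu^*|T| \le \langle\bu_\epsilon, \sum_{\bv\in T}\epsilon_\bv\bv\rangle \le \|\sum_{\bv\in T}\epsilon_\bv\bv\|_\Hilbert \le 2R\sqrt{|T|}$) gives $|T| \le 4(R/\mu^*)^2$. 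Plugging into Lemma~\ref{lemm:Theta} yields $|\Theta| = O(|\multiallins|^{8(R/\mu^*)^2})$.

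For part (ii), when $\Phi$ is the identity on $\calX \subseteq \Real^\ell$, $\pdiff \subseteq \Real^\ell$ and $F_U$ is a subclass of the homogeneous linear classifiers on $\Real^\ell$, so $d^*_{\Phi,S} \le \ell$; combined with part (i) this is $d^*_{\Phi,S} = O(\min\{(R/\mu^*)^2,\ell\})$, hence $|\Theta| = O(|\multiallins|^{\min\{8(R/\mu^*)^2, 2\ell\}})$. For part (iii), I would note that for $\bv = \Phi(z_1)-\Phi(z_2) \in \pdiff$ and $\bu = \Phi(z) \in U$,
\[
 \sign\langle\bu,\bv\rangle \;=\; \sign\big(K(z,z_1) - K(z,z_2)\big) \;=\; \sign\big(\|z-z_2\|_2^2 - \|z-z_1\|_2^2\big) \;=\; \sign\big(\langle (1,z),\, \psi(\bv)\rangle\big),
\]
using the monotonicity of $K$ in the Euclidean distance, where $\psi(\bv) = \big(\|z_2\|_2^2 - \|z_1\|_2^2,\; 2(z_1-z_2)\big) \in \Real^{\ell+1}$. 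So via $\bv \mapsto \psi(\bv)$ the restriction of $F_U$ to $\pdiff$ is realized by a subclass of homogeneous halfspaces in $\Real^{\ell+1}$ (those with normal $(1,z)$), whose VC dimension is at most $\ell+1 = O(\ell)$; combined with part (i) this again gives $d^*_{\Phi,S} = O(\min\{(R/\mu^*)^2,\ell\})$ and the stated bound on $|\Theta|$.

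The main obstacle is the geometric claim in part (i): that a VC-shattered $T$ can always be placed inside the facet set $V_I$ of a single cell $I$, so that the margin guarantee encoded in the definition of $\mu^*$ covers all of $T$ at once. This needs the linear independence of a shattered $T$ together with a sufficiently generic choice of the witness near $\bigcap_{\bv\in T}\{\langle\cdot,\bv\rangle=0\}$ (to avoid accidental incidences with other hyperplanes, or to absorb such incidences as extra facets of $I$); the remaining ingredients — the Rademacher-based fat-shattering bound for bounded linear functionals and the VC dimension of halfspaces — are standard.
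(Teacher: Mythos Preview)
Your arguments for parts (ii) and (iii) are correct and close in spirit to the paper's; the reduction in (iii) via $\psi(\bv)=\big(\|z_2\|_2^2-\|z_1\|_2^2,\,2(z_1-z_2)\big)$ is a clean variant of the paper's Voronoi-correspondence argument and yields the same $O(\ell)$ bound on $d^*_{\Phi,S}$.

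The genuine gap is in part (i). Your geometric claim --- that for every dichotomy $\epsilon$ on a shattered set $T\subseteq\pdiff$ one can find a cell $I\in\calI$ with $T\subseteq V_I$ (up to sign) whose representative $\bu_I$ realizes $\epsilon$ --- fails already in small examples. Take $\pdiff\supseteq\{e_1,e_2,e_1-e_2\}$ (so $e_1-e_2\in\mathrm{span}(T)$ is ``forced through'' $W=\bigcap_{\bv\in T}H_\bv$) and $T=\{e_1,e_2\}$. For $\epsilon=(+,+)$ you would need a cell inside $\{x_1>0,\,x_2>0\}$ having both $H_{e_1}$ and $H_{e_2}$ as facets. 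But $H_{e_1-e_2}$ splits this region into $\{x_1>x_2>0\}$ with minimal description $V_I=\{e_2,\,e_1-e_2\}$ and $\{x_2>x_1>0\}$ with $V_I=\{e_1,\,e_2-e_1\}$; in each case one of $H_{e_1},H_{e_2}$ is a redundant constraint, not a facet. ``Absorbing'' the extra hyperplane as a facet is precisely what evicts a member of $T$ from $V_I$, so the route from VC-shattering of $T$ to $\mu^*$-fat-shattering of $T$ via $V_I$ does not go through.

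The paper's proof of (i) sidesteps this entirely: rather than bounding $d^*_{\Phi,S}$ by a fat-shattering argument on a shattered $T$, it returns to the cell-counting behind Lemma~\ref{lemm:Theta}. Each cell $I$ determines a distinct sign pattern on all of $\pdiff$, and the paper argues from the definition of $\mu^*$ that each such pattern is realized by some unit vector in $U''=\{\bu:\|\bu\|_\Hilbert\le 1,\ \min_{\bv\in\pdiff}|\langle\bu,\bv\rangle|=\mu^*\}$. Hence the number of cells is at most the number of labelings of $\pdiff$ by large-margin halfspaces, whose VC dimension is $O((R/\mu^*)^2)$ by the classical bound; Sauer's lemma then gives the exponent directly. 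The key structural difference is that the paper works cell-by-cell with the full sign pattern on $\pdiff$, never needing to embed a shattered $T$ inside the facet set $V_I$ of a single cell.
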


\begin{proof}
 (i)
 We follow the argument in Lemma 4.
 For the set of classifiers $F=\{f:\pdiff \to \{-1,1\} \mid
 f=\sign(\langle \bu, \bv\rangle), \|\bu\|_\Hilbert \leq 1, 
 \min_{\bv \in \pdiff}| \langle \bu, \bv \rangle|=\mu \}$,
 its VC dimension is known to be at most $R^2/\mu^2$
 for $\pdiff \subseteq \{\bv \mid \|\bv\|_\Hilbert \leq 2R\}$ (see, e.g.,~\cite{LK02}).
 By the definition of $\mu^*$,
 for each intersections given by hyperplanes, there always exists a point $\bu$
 whose inner product between each hyperplane is at least $\mu^*$.
 Therefore, the size of the intersections is bounded by the number of possible
 labelings in the dual space by $U''=\{\bu \in \Hilbert \mid
 \|\bu\|_\Hilbert \leq 1,  \min_{\bv \in \pdiff}| \langle \bu,
 \bv \rangle|= \mu^* \}$.
 Thus we obtain that $d_{\Phi, S}^*$ is at most $8(R / \mu^*)^2$ and by
 Lemma 4, we complete the proof of case (i).

 (ii) In this case, the Hilbert space $\Hilbert$ is contained in
 $\Real^\ell$. Then, by the fact that VC dimension $d_{\Phi,S}^*$ is at
 most $\ell$ and Lemma 4, the statement holds.

 (iii)
If $\left\langle\Phi(z), \Phi(x) \right\rangle$ is monotone
decreasing for $\|z-x\|$,
then the following holds:
\[
\arg\max_{x \in \calX} \left\langle\Phi(z), \Phi(x) \right\rangle = \arg\min_{x \in \calX} \|z-x\|_2.
\]
Therefore, $\max_{\bu:\|\bu\|_\Hilbert = 1}\langle \bu, \Phi(x)
\rangle = \|\Phi(x)\|_\Hilbert$, where $\bu =
\frac{\Phi(x)}{\|\Phi(x)\|_\Hilbert}$. 
It indicates that the number of Voronoi cells
made by $V(x)=\{z \in \Real^\ell \mid z=\arg\max_{x \in B}
  (z \cdot x) \}$ corresponds to the
$\hat{V}(x)=\{\Phi(z) \in \Hilbert \mid z=\arg\max_{x \in B}
 \langle \Phi(z), \Phi(x) \rangle \}$.
 Then, by following the same argument for the linear kernel case, we get
 the same statement.
\end{proof}
Now we are ready to prove Theorem~\ref{theo:main}.
\begin{proof}[Proof of Theorem~\ref{theo:main}]
By using Lemma~\ref{lemm:RC_and_GC}, and
 \ref{lemm:ensemble_margin_bound},
we obtain the generalization bound in terms of the Gaussian
 complexity of $H$. Then, by applying Lemma~\ref{lemm:GC} and Theorem~\ref{theo:Theta},
we complete the proof.
\end{proof}

\subsection{Heuristics for Improving Efficiency}
For large multiple-instance data such as time-series data that may 
contain a lot of subsequences,
we need to reduce the computational cost of weak learning
in Algorithm~\ref{alg:WeakLearn}.
Therefore, we introduce two heuristic options for improving efficiency.
First, in Algorithm~\ref{alg:WeakLearn}, we fix an initial
$\balpha_{0}$ as following:
More precisely, we initially solve 
\begin{align*}
\balpha_0 = \arg\max_{\balpha}& \sum_{i=1}^md_iy_i\max_{x \in B_i} \sum_{z \in \allins}
   \alpha_{z} K\left(z, x \right), \\
   ~~\text{sub.to}&~~ \balpha ~\text{is a one-hot vector.}
\end{align*}
That is, we choose the most discriminative shapelet from $\allins$
as the initial point of $\bu$ for given $\bd$.
We expect that it will speed up the convergence 
of the loop of line 3, and the obtained classifier is better 
than the methods that choose effective 
shapelets from subsequences.
For Gun-Point data described in Table~\ref{tab:acc1},
this method obtains the solution approximately seven times faster
than when random vectors are used as the initial $\balpha_0$.

Second, we reduce the high computational cost induced by 
calculating $\sum_{z \in \allins} d_k\alpha_{z} K\left(z, x\right)$
in our problem.
For example, when we 
consider subsequences as instances for time series classification, 
we have a large computational cost because of the number of subsequences 
of training data (e.g., approximately $10^6$ when sample size is $1000$ and
length of each time series is $1000$, 
which results in a similarity matrix of size $10^{12}$). 
However, in most cases, many subsequences in time series data 
are similar to each other.
Therefore, we only use representative instances $\hatallins$
instead of the set of all instances $\allins$. 
In these experiments, we extract $\hatallins$ via
$k$-means clustering of $\allins$.
Although this approach may decrease the classification accuracy,
it drastically decreases the computational cost for a large dataset.
For the Gun-Point dataset, 
this approach with $k=10$ still achieves high classification accuracy and
it is over $2000$ times faster on average than when all of subsequences
are used.
In our experiments including the MIL task, 
we use $100$-means clustering to obtain the representative instances.
Surprisingly, as demonstrated in the experiments, these 
heuristics work well not only for time-series data but also 
for multiple-instance data in practice.
\end{document}